\tikzset{block/.style={draw,thick,text width=2cm,minimum height=1cm,align=center,node distance = 5em},
         line/.style={-latex}
}
\newcommand{\R}{\mathbb{R}}
\newcommand{\E}{\mathbb{E}}
\newcommand{\set}[1]{\left\{ #1 \right\}}
\newcommand{\beq}{\begin{eqnarray*}}
\newcommand{\eeq}{\end{eqnarray*}}
\newcommand{\beqn}{\begin{eqnarray}}
\newcommand{\eeqn}{\end{eqnarray}}
\newcommand{\x}{\vec x}
\newcommand{\w}{\vec w}
\renewcommand{\vec}[1]{\ensuremath{\text{{\bf\textrm{#1}}}}}
\newcommand{\surL}{\tilde L} 
\newcommand{\norm}[1]{\left\lVert#1\right\rVert}
\begin{document}

\title{Apportioned Margin Approach for Cost Sensitive Large Margin Classifiers}


\author{Lee-Ad Gottlieb          \and
        Eran Kaufman 		\and
        Aryeh Kontorovich 
}


\institute{  
          Lee-Ad Gottlieb  \at
              Ariel University \\
              \email{leead@ariel.ac.il}                 
           \and
Eran Kaufman \at
              Ariel University \\
              \email{erankfmn@gmail.com}           
                  \and
           Aryeh Kontorovich 
	  \at
              Ben-Gurion University  \\
              \email{karyeh@cs.bgu.ac.il}   
}

\date{Received: date / Accepted: date}

\maketitle

\begin{abstract}
We consider the problem of cost sensitive multiclass classification,
where we would like to increase the sensitivity of an important class at the 
expense of a less important one.
We adopt an {\em apportioned margin} framework to address this problem,
which enables an efficient margin shift between classes that share the same boundary.
The decision boundary between all pairs of classes divides the margin
between them in accordance to a given prioritization vector, 
which yields a tighter error bound for the important classes
while also reducing the overall out-of-sample error.
In addition to demonstrating an efficient implementation of our framework,
we derive generalization bounds, 
demonstrate Fisher consistency,
adapt the framework to Mercer's kernel and to neural networks,
and report promising empirical results on all accounts.
\end{abstract}

\section{Introduction}
Cost-sensitive learning \citep{DBLP:conf/ijcai/Elkan01} is a widely studied topic in classification, 
with multiple engineering applications including security surveillance \citep{DBLP:conf/mm/2003}, 
geomatics \citep{DBLP:journals/ml/KubatHM98}, telecommunications \citep{DBLP:journals/datamine/FawcettP97},
medicine \citep{1527706}, bioinformatics \citep{doi:10.1093/protein/gzh061}, signal processing \citep{DBLP:journals/tbe/ShaoSOWL09}, 
and handwritten digit recognition \citep{DBLP:journals/pr/LauerSB07,DBLP:journals/corr/McDonnellTST14}.
In this setting, some labels or classes are more ``important'' than 
others, in the sense that an error on these labels is more costly than on the others.
The total cost is the sum over all classes of the probability of erring
on that class, times the importance (or weight) assigned to that class.
A widely used approach for this problem is to assign each point a weight,
equal to the weight of its class. 
%
As pointed out by \cite{IMV-19}, the limitations of this approach are 
apparent even in the simple case of linearly separable data -- 
that is, in the absence of classification errors -- 
where the decision boundary will be placed in the middle of the sets,
irrespective of the different label costs.

More generally, one can show that the probability of erring on a specific class is 
inversely proportional to the distance of that class to the linear classifier 
(see for example Section \ref{sec:stat} and Corollary \ref{cor:generalization}).
This directly implies that the overall cost may be minimized by shifting the
margin away from the important class (Figure \ref{fig:bin_border} (c)), 
and further that the optimal shift 
is determined by the proportion of the weights of the two classes.
Motivated by these statistical considerations, and 
in contradistinction to point cost-based solutions, 
we consider a multiclass classification
approach based on apportioned margin. Here, the decision boundary
between adjacent classes is shifted away from the more important class towards the less important
class, based on the statistically optimal proportion.
This has the effect of increasing the margin of one class at the cost of reducing the margin from
another. Thus, the out-of-sample error probability for the important class is reduced,
as is the overall cost.

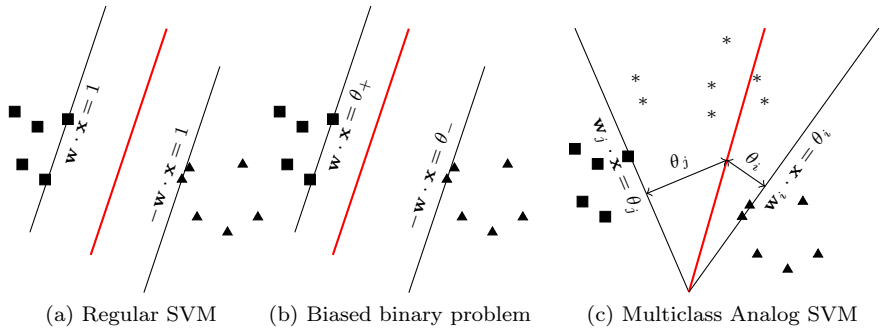
\begin{figure}    

\subfloat[Regular SVM]{\label{fig:a}{

\begin{tikzpicture}
\scriptsize
  \draw[red,thick] (6.5,-1.5) -- node [sloped, anchor=center, above, text width=2.0cm]
  {} (7.5,1.5);

\draw  (6.7,1.8) -- node [sloped,below] {$\vec w \cdot \vec x = 1$} +($(7,-1)-(8,2)$);
\draw (8.2,1) -- node [sloped,above] {$- \vec w \cdot \vec x =1$} +($(7,-1)-(8,2)$);

  \foreach \Point in {(5.9,-0.5), (5.8,0.2), (6.2,0.3), (5.5,0.4), (5.6,-0.3)}{
    \node at \Point {\pgfuseplotmark{square*}};};

\foreach \Point in {(8.7,-1), (7.9,-1), (8.5,-0.3), (8.3,-1.2),(7.7,-0.5),(7.8,-0.35)}{
\node at \Point  {\pgfuseplotmark{triangle*}};};
\end{tikzpicture}}}
\subfloat[Biased binary problem]{\label{fig:b}{
\begin{tikzpicture}
\scriptsize
  \draw[red,thick] (6.2,-1.5) -- node [sloped, anchor=center, above, text width=2.0cm]
  {} (7.2,1.5);
  
\draw  (6.7,1.8) -- node [sloped,below] {$\vec w \cdot \vec x = \theta_+$} +($(7,-1)-(8,2)$);
\draw (8.2,1) -- node [sloped,above] {$ - \vec w \cdot \vec x = \theta_-$} +($(7,-1)-(8,2)$);

  \foreach \Point in {(5.9,-0.5), (5.8,0.2), (6.2,0.3), (5.5,0.4), (5.6,-0.3)}{
    \node at \Point {\pgfuseplotmark{square*}};};

\foreach \Point in {(8.7,-1), (7.9,-1), (8.5,-0.3), (8.3,-1.2),(7.7,-0.5),(7.8,-0.35)}{
\node at \Point  {\pgfuseplotmark{triangle*}};};  
\end{tikzpicture}
}}\quad
  \subfloat[Multiclass Analog SVM]{\label{fig:c}{

\begin{tikzpicture}
\scriptsize
  \draw (7,-1.5) -- node [sloped,below] {$\vec w_j \cdot \vec x = \theta_j$} (5.5,2);
  \draw[red,thick] (7,-1.5) -- node [sloped, anchor=center, above, text width=2.0cm]
  {} (8,2);
  \draw (7,-1.5) -- node [sloped,below] {$\vec w_i\cdot\vec x =  \theta_i $} (9.5,2);

  \foreach \Point in {(5.9,-0.5), (5.8,0.2), (6.2,0.3), (5.5,0.4), (5.6,-0.3)}{
    \node at \Point {\pgfuseplotmark{square*}};};

  \foreach \Point in {(6.3,1.3), (6.4,1) ,(7.3,0.82), (7.5,1.8), (8,1), (7.9,1.3), (7.3,1.2)}{
\node at \Point  {*};};

\foreach \Point in {(8.7,-1), (7.9,-1), (8.5,-0.3), (8.3,-1.2),(7.7,-0.5),(7.8,-0.35)}{
\node at \Point  {\pgfuseplotmark{triangle*}};};

 \draw [<->,above] (7.5,0.25) -- node [sloped]
  {$\theta_i$}($(7,-1.5)!(7.5,0.25)!(9.5,2)$) ;
  
   \draw [<->,above] (7.5,0.25) -- node [sloped]
  {$\theta_j$}($(7,-1.5)!(7.5,0.25)!(5.5,2)$) ;
\end{tikzpicture}
}}
\caption{Support hyperplanes and their bisector.}
\label{fig:bin_border}
\end{figure}

In this paper, we present our apportioned margin framework, explain its
advantage over previous approaches, and show how to efficiently implement 
its associated algorithms (Sections \ref{sec:framework},\ref{SGD}).
We prove that our new framework has strong statistical foundation 
(Sections \ref{sec:stat}),
and present promising experiments on real-world data (Section \ref{sec:exp}).

\subsection{Background and related work}\label{sec:background}
\paragraph{Binary cost-sensitive classification.}
In binary classification there are two primary methods for inducing cost on a  classification:
The first is by changing the bias term \citep{DBLP:journals/pr/Bradley97,DBLP:journals/tkde/HuangL05}.
 In this framework we find a balance classifier $h(\vec x)$ between the two classes, then we create 
 a new classifier $h'(\vec x) = h(\vec x) - \theta$. By modifying the values of $\theta$ we can cause the sensitivity
 of one class to grow at the expense of the other, i.e. to prefer false positives over false negatives.
 
 
A second common approach in binary classification is the class weighting approach.
The weighted support vector machine (WSVM) was originally
proposed by \citet{DBLP:journals/tnn/LinW02} 
and further developed by \citet{10.1007/978-3-642-25661-5_47, DBLP:journals/ijon/AnL13,ke2013}.
These algorithms assign weights to data examples based on their importance.
Here, the cost coefficients are directly factored into the SVM optimization problem \citep{DBLP:journals/tnn/LinW02}: 
\begin {equation}
\begin{aligned}
 &\min &&\frac{\lambda}{2}||W||^2+C^+\sum_{i | y_i = +1} \xi_i +  C^-\sum_{j | y_j = -1} \xi_{j}\\
&s.t. &&\vec w \cdot \vec x_i +b \geq   1- \xi_i\\
&&&\xi_{i} \geq 0 \quad \forall i
\end{aligned}
\end{equation}
where $C^+$ and $C^-$ are the different costs of the two classes.
A different formulation
assigns cost to points  instead of classes \citep{Yang2005WeightedSV}:
\begin {equation}
\begin{aligned}
 &\min &&\frac{\lambda}{2}||W||^2+\sum s_i \xi_i \\
&s.t. &&\vec w \cdot \vec x_i +b \geq   1- \xi_i\\
&&&\xi_{i} \geq 0 \quad \forall i
\end{aligned}
\end{equation}
where $s_i$ is the weight 
of the $i$th example.

The individual weights can be either be chosen via inverse class size
for an unbalanced set 
\citep{JMLR:v17:14-526,DBLP:journals/ml/FungM05, DBLP:books/sp/FernandezGGPKH18}
or
via
kernel-based probabilistic $c$-means (KPCM)
for outlier detection \citep{Yang2005WeightedSV}.

\paragraph{Multiclass SVM.}
There are two major approaches to multiclass SVM classification.
The first approach decomposes the $k$-class problem into multiple binary
classification subproblems:
The problem can be decomposed into $k$ one-vs-all
or $
{k\choose2}
$ one-vs-one binary problems
(respectively, max-win and all pairs),
and the solutions combined by majority vote.
The second approach is to solve the multiclass problem directly by
incorporating the multiple classes into a single optimization model, see
\citet{DBLP:conf/esann/WestonW99, DBLP:journals/coap/BredensteinerB99, DBLP:journals/jmlr/CrammerS01}.
These approaches do not incorporate cost-sensitivity into their main objective function.
Rather, they
impose
a single objective function 
for training $k$ binary SVMs simultaneously while maximizing
the margins from each class to all remaining ones.
  Given a labeled training sample of size $n$ represented by $\{(\vec x_{1},y_{1}), \ldots ,(\vec x_{n},y_{n})\}$, 
  where $\vec x_i \in \mathbb{R}^{d}$, $y \in \{ 1, \ldots ,k \} $, define the $k\times d$
  matrix $W$ as consisting of row vectors
  $\vec w_j$
  corresponding to the hyperplane separating class $j$ from the rest.
\citet{DBLP:conf/esann/WestonW99} formulate the optimization problem as:
\begin {equation}
\begin{aligned}
 &\min &&\frac{\lambda}{2}||W||_2^2+\sum_{i=1}^{n} \sum_{j \neq y_j} \xi_{ij}\\
&s.t. && \vec w_{y_i} \cdot \vec x_i +b_{y_i} \geq   \vec w_{y_j} \cdot \vec x_i +b_{y_j}+2- \xi_{ij} \quad \forall i,j \\
&&&\xi_{ij} \geq 0  \quad \forall i,j,
\end{aligned}
\end{equation}
where $||W||_2$ is the Frobenius
norm of the matrix $W$, 
and serves as a regularizer to prevent overfitting.
\citet{DBLP:journals/jmlr/CrammerS01} proposed :
\begin{equation}
\begin{aligned}
 &\min &&\frac{\lambda}{2}||W||_2^2+\sum_{i=1}^{n}  \xi_{i} \\
&s.t. && \vec w_{y_i} \cdot \vec x_i  -  \vec w_{y_j} \cdot \vec x_i  \geq   1-\delta_{y_i,j}-\xi_{i} \quad \forall i,j \\
&&& \xi_{i} \geq 0 \quad \forall i,
\end{aligned}
\end{equation}
where $\delta_{y_i,j}$ is the Kronecker delta. 

\paragraph{Weighted multiclass.}
Let a {\em priority vector} be of the form $\theta \in (0,\infty)^k$,
and this assigns different costs to different labels. 
One can construct weighted versions of the above multiclass algorithms,
namely cost-sensitive one-vs-one (CSOVO), 
cost-sensitive ove-vs-all (CSOVA), 
cost-sensitive Crammer-Singer (CSCS), etc.\ \citep{DBLP:conf/kdd/JanWLL12}.
Other methods suggest using a cost matrix $C_{y,y'}$ where there is not just a single cost associated 
with misclassifying a class,
but rather different costs are applied for misclassification of one class to different classes.
In this category 
  \citet{doi:10.1198/016214504000000098} suggested altering the multiclass formulation 
 by estimating the conditional label distribution $P(Y = j | X = x)$,
 and employing the Bayes-optimal classifier $\text{argmin}_y \mathbb{E}_{P(y|x)} C_{y,y'}$, 
where $C_{y,y'}$ is the element of the cost matrix.
This was further investigated by \cite{doi:10.1198/jcgs.2010.09206}
via a
reinforced multicategorial approach.
\citet{JMLR:v17:11-229} suggested a unified view of the multiclass support vector machines covering most variants.
If we adopt the empirical risk minimization framework, 
we can define a decision function vector $f = (f_1 (x), \ldots, f_k (x))^T \in  \mathbb{R}^k$, 
where each component corresponds to one class.
Then the prediction rule is $\hat{y} = \text{argmax}_{j\in \{1,\ldots,k\}}f_j(x)$ for any new data point x,
and the optimization formulation can typically be written as
$\min_{f \in F_j} \lambda \mathbb{J}(f) + \sum_{i=1}^n \ell (f(x_i),y_i)$.
The first part of this objective function is the penalty term  $\mathbb{J}$ to prevent overfitting, 
the second part is the empirical loss term, 
and $\lambda$ is a tuning parameter that balances the loss and penalty terms.
In Table \ref{tab:sum} we use this terminology to summarize the different approaches.
(See also \citet{DBLP:conf/uai/AsifXBZ15} for an adversary constrained zero-sum game approach,
and \citet{10.1093/biomet/asu017,DBLP:journals/ma/FuZL18} for an angle-based large-margin classifier.)

\begin{table*}
 \begin{tabular}{|ll|}
    \toprule
    	{\small {\bf Loss function} $ \ell (f(x),y)$ } & {\small \bf Method} \\
    \midrule
	{ $[1-f_y(x)]_+ $}  &{one-vs-all}\\
    \midrule
	{ $\sum_{j \neq y} [1-(f_y(x)-f_j(x))]_+ $} &{ \citet{DBLP:conf/esann/WestonW99} }   \\
    \midrule
	{ $[1- \min_{ j \neq y}(f_y(x)-f_j(x))]_+ $} &{ \citet{DBLP:journals/jmlr/CrammerS01} }   \\
    \midrule
	{ $\sum_{j \neq y} [1+f_j(x)]_+ $} &{ \citet{doi:10.1198/016214504000000098} }   \\
    \midrule
	{ $\gamma [(k-1) - f_y(x)]_+  + (1-\gamma) \sum_{j \neq y} [1+f_j(x)]_ +$} &{ \citet{doi:10.1198/jcgs.2010.09206} }   \\
    \bottomrule
\end{tabular}%
\caption{Various loss functions. $[.]_+$ is the ramp function $[u] = \max(u,0)$.}
\label{tab:sum}
\end{table*}

\paragraph{Our contribution.}
The above large-margin classifier approaches are based on point 
misclassification. In contrast, we suggest directly imposing a
margin proportional to the importance of the weight of a class
-- in effect, improving the 
sensitivity of one class at the expense of another.
As one can prove that the error bound of a class
is inversely proportional to its margin (Section \ref{sec:stat}),
this approach has sound theoretical foundation.
That other approaches do not directly impose a proportional margin
is clearly illustrated by the simple case of linearly separable data,
as shown in Figure \ref{fig2} in the experimental section). 
In this case, the above methods all fail to
shift the margin away from the more important classes in proportion to the 
cost vector, while our method is precisely tailored for this purpose.
The above methods can indeed move the margin in response to a misclassification,
but this does not give the desired ratio which we directly impose.\footnote{It is
perhaps conceivable that these methods can shift the margin to the statistically
justified proportion via an ad-hoc use of the regularization parameter, but his would require 
the introduction and cross-validation of at least $k$ separate parameters, which is an intensive task.}
It is therefore unsurprising that our method consistently out-performs the 
others experimentally (Section \ref{sec:exp}).

%
%

\begin{figure}%
    \centering
\includegraphics[width=8cm]{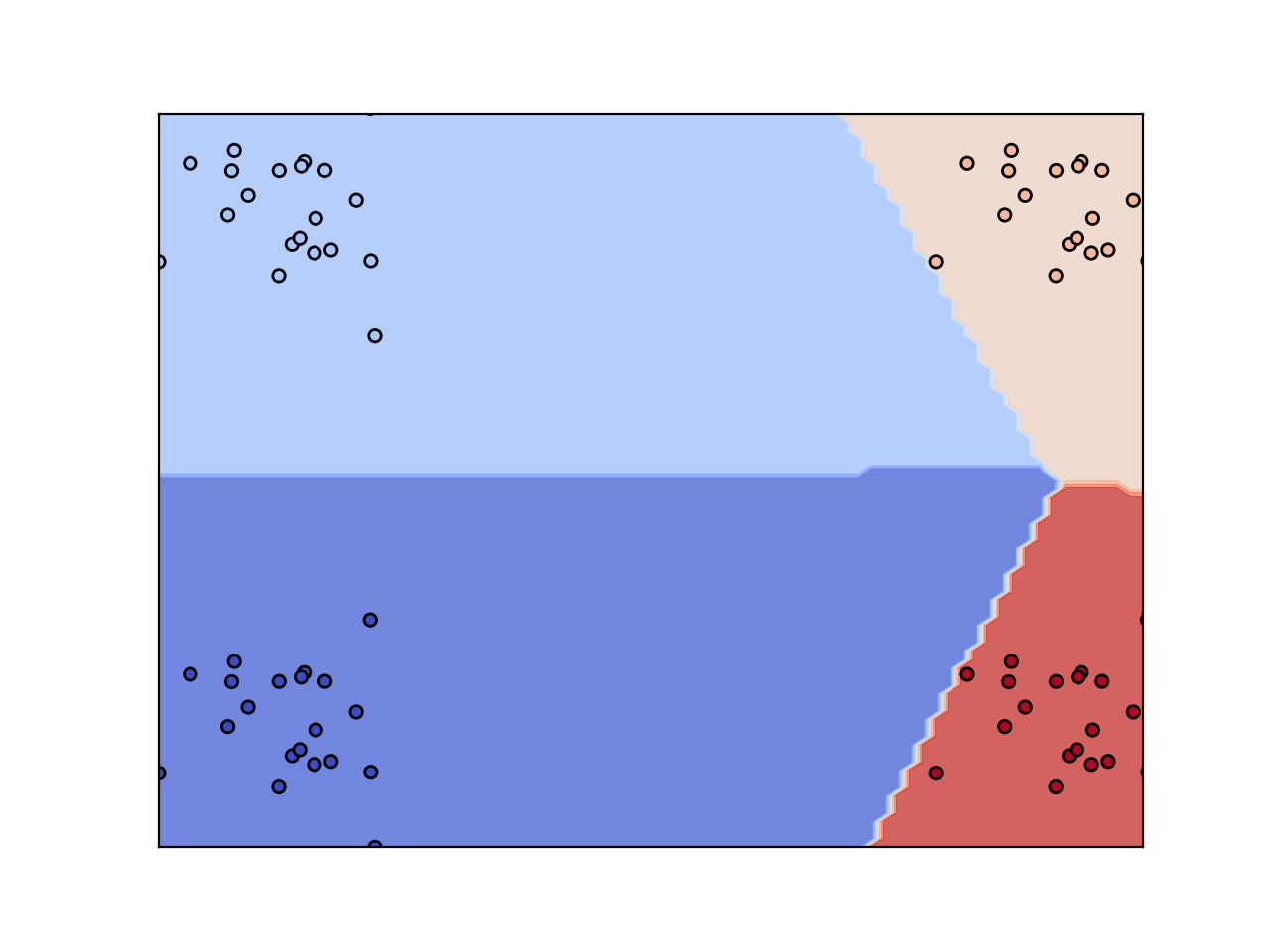}

\caption{An example of a desired decision boundary with 
priority vector $\{10,10,1,1\}$.}
\label{fig:border1}
 \end{figure}

\paragraph{Fisher consistency.}
Define $P_j(x)=P(Y=j | X=x)$, and a classifier with loss $\ell (f (x), y)$ is 
{\em Fisher consistent} if the minimizer of $E_P[\ell (f (x), y)]$ 
has the property $\text{argmax}_j f_j = \text{argmax}_j P_j$.
Although the binary SVM is known to be Fisher consistent, not all MSVMs are.
 In particular, in Table \ref{tab:sum} rows $1-3$ are known to not always be consistent.
In contrast, row 5 is always Fisher consistent when 
$\gamma \in [0, 0.5]$, which also covers row 4 as a special case with $\gamma = 0$ \citep{pmlr-v2-liu07b}.
Recall that the cost function of the cost-sensitive classification problem 
is the sum over all classes of the probability of erring on that class
times the weight assigned to that class;
we will define a cost-sensitive classifier to be Fisher consistent if the minimizer of $E_P[\ell (f (x), y)]$ 
has the property $\text{argmax}_j f_j = \text{argmax}_j \theta_j P_j$ . 
In Section \ref{sec:stat} we prove that the our algorithm is cost-sensitive Fisher consistent.


\section{The apportioned margin framework}\label{sec:framework}
Given a cost sensitive problem, a desired property for 
our classifier is to impose larger margins for the more ``expensive'' labels.
%
Our cost-sensitive framework allows for a flexible method for shifting
the decision boundary between different pairs of classes.
Intuitively, we ``budget'' the regions between conflicting classes according to some priority vector.
This goal is
illustrated graphically in Figure \ref{fig:border1}:
The two classes on the left have identical costs, 
but their cost is greater than that of the two 
classes on the right, which also have identical costs.
As a result, the horizontal boundaries
are centered, but the vertical boundaries are shifted to the right.
As shown in the experimental section Figure \ref{fig2}, the widely used methods for
cost sensitive multi-classification do not achieve this.
%
%
%
%
%
%
%
%

In binary SVM, the solution vector 
$\vec{w}$ defines a separator, whose margin depends on 
$\norm{\vec{w}}$. At the two edges of the margin lie the 
hyperplanes $\vec{w} \cdot \vec{x} \pm 1 = 0$.
If we were to denote $\vec w = \vec w_+ = - \vec w_-$ as two classifiers for the positive and negative  
examples  respectively,
then the binary hyperplanes can be reformulated as $\vec w_j x  = 1$ and the decision function $\hat{y} = \text{argmax}_j \vec w_j x$. 
(See Figure \ref{fig:bin_border} (a)).
Conversely if we were to scale the samples with $\theta_j$ and shift the decision boundary by a ratio 
of $\frac{\theta_+}{\theta_-}$ (which is also the ratio between their Bayesian probabilities),  
then the hyperplanes could be formulated as $\vec w_j x = \theta_j $ 
and the decision function $\text{sign} (\vec w  x)$ 
written as $\hat{y} = \text{argmax}_j \frac{\vec w_j x}{\theta_j}$.
Fortunately, this formulation can be extended to multiclass categorization.


By analogy to the binary hard-margin setting, 
we assume that each hyperplane separates its class from all others.
While in the binary SVM settings the two hyperplanes are parallel, 
in the more general multiclass problem the different hyperplanes 
can intersect (see Figure \ref{fig:bin_border} (c)).
Consider two hyperplanes $\vec{w}_i, \vec{w}_j$ separating classes $i,j$.
The decision boundary between them is a bisecting hyperplane.
We desire that the ratio $\frac{\theta_{i}}{\theta_{j}}$ define the distance from 
a weighted bisector to the hyperplanes, meaning that the ratio of the scaled distance of the 
weighted bisector from class $i$ to its distance from class $j$ 
should be $\frac{\theta_{i}}{\theta_{j}}$.
(A weighted bisector is illustrated in Figure \ref{fig:bin_border}.)
The following lemma formalizes the geometric intuition of the interaction
between two neighboring classes.
\begin{definition}
Let 
$w^{ij} = w_i - w_j$, 
$\overline w_j = \frac{\vec w_j}{\theta_j}$ 
and 
$\bar w^{ij} = \bar w_i - \bar w_j$.
\end{definition}
\begin{lemma}\label{lem:place}
The set of all points whose
ratio of scaled distances from the hyperplanes 
$\vec w_i \cdot \vec x = \theta_i$ and $\vec w_j \cdot \vec x = \theta_j$
is $\frac{\theta_i}{\theta_j}$,
is given by the hyperplane $ \overline {w}_{ij} \vec x = 0$.
\end{lemma}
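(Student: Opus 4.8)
The plan is to read the lemma as a direct algebraic identity about the two support hyperplanes $\vec w_i \cdot \vec x = \theta_i$ and $\vec w_j \cdot \vec x = \theta_j$, and to reduce the stated ratio condition to a single linear equation in $\vec x$.

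First I would pin down the notion of \emph{scaled distance} implicit in the statement. For a point $\vec x$ lying between the two classes, the natural signed functional distance to the support hyperplane of class $i$ is $s_i(\vec x) = \theta_i - \vec w_i \cdot \vec x$, and likewise $s_j(\vec x) = \theta_j - \vec w_j \cdot \vec x$; these are exactly the quantities that vanish on the respective hyperplanes, and (with the decision rule $\hat y = \mathrm{argmax}_j \overline w_j \cdot \vec x$) both are positive for $\vec x$ strictly between the classes, so their ratio is well defined and positive. Dividing by $\norm{\vec w_i}$ would turn $s_i$ into the Euclidean distance, but the cancellation that makes the claim clean survives only for the unnormalised (functional) form, so I would keep that form and flag this as the point where the word ``scaled'' does its work.

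Next I would write the defining condition of the locus as $s_i(\vec x)/s_j(\vec x) = \theta_i/\theta_j$ and observe that each scaled distance factors through the normalised classifier $\overline w_i = \vec w_i/\theta_i$ from the preceding definition:
\[
s_i(\vec x) = \theta_i\,(1 - \overline w_i \cdot \vec x), \qquad s_j(\vec x) = \theta_j\,(1 - \overline w_j \cdot \vec x).
\]
Substituting into the ratio, the factors $\theta_i$ and $\theta_j$ cancel against the target $\theta_i/\theta_j$, leaving $1 - \overline w_i \cdot \vec x = 1 - \overline w_j \cdot \vec x$, i.e. $\overline w_i \cdot \vec x = \overline w_j \cdot \vec x$. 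Rearranging gives $(\overline w_i - \overline w_j)\cdot \vec x = 0$, which is precisely $\overline w^{ij} \cdot \vec x = 0$. As a sanity check I would specialise to the binary reformulation $\vec w_i = -\vec w_j$ and confirm that the locus collapses to $\vec w_i \cdot \vec x = 0$, whose functional distances to the two margins stand in ratio $\theta_i : \theta_j$, matching the shifted bisector drawn before the lemma.

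The computation itself is routine; the only real difficulty is conceptual rather than technical, and I would be careful about two points. First, the distance must be the functional/scaled one: with the Euclidean normalisation the $\norm{\vec w_i}$ and $\norm{\vec w_j}$ factors do not cancel and the locus fails to be the clean hyperplane $\overline w^{ij}\cdot\vec x = 0$. Second, the signs must be handled consistently, so that for $\vec x$ strictly between the classes both $s_i$ and $s_j$ carry the same sign and the ratio is genuinely $+\theta_i/\theta_j$. I would also remark on the degenerate points where $s_j(\vec x)=0$ (on the hyperplane of class $j$): these are excluded from the ratio but remain consistent with the limiting equation, so the final description as a single hyperplane is unaffected.
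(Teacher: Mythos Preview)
Your argument is correct and is essentially the paper's own proof: the paper also writes the ratio condition as $\dfrac{\vec w_j\cdot\vec x-\theta_j}{\vec w_i\cdot\vec x-\theta_i}=\dfrac{\theta_j}{\theta_i}$, divides through by the $\theta$'s to obtain $\overline w_j\cdot\vec x-1=\overline w_i\cdot\vec x-1$, and concludes $(\overline w_i-\overline w_j)\cdot\vec x=0$. Your additional remarks on the meaning of ``scaled'' distance, sign consistency, and the degenerate case $s_j(\vec x)=0$ are welcome clarifications but do not change the route.
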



\begin{proof}
$\frac{\vec w_j \cdot \vec x - \theta_j}{\vec w_i \cdot \vec x -\theta_i} = \frac{\theta_j}{\theta_i}
\implies \overline{w}_j \cdot \vec{x} - 1 = \overline{w}_i \cdot \vec{x} -1
\implies (\overline w_j-\overline w_i) \vec x = \overline {w}^{ij} \vec x = 0.$
\end{proof}

Lemma~\ref{lem:place} gives the decision rule between two classes $i,j$, and implies that our
overall decision function is
 \begin{equation}
 \hat y = \text{argmax}_{l \in [k]} \overline w_l \cdot \vec x
   .
  \end{equation}
  
\subsection{The Optimization Formulation}

Taking $\theta_y$ to be the cost of making a mistake on label $y$
implies the following loss on the example-label pair $(\vec x , y )$:
%
%
\begin{equation}
\begin{split}
&L(\vec x , y)= {\begin{cases} 
\theta_y, &{\text{if }}  \max \{\overline w_l \} \cdot \vec x - \overline w_y \cdot \vec x  >  0\\
0 &{\text{otherwise}}.
\end{cases}}
\end{split}
\end{equation}  
As desired, this loss function is asymmetric and discourages error on
relatively ``important''
classes.
Unfortunately this problem subsumes that of minimizing zero-one loss,
which known to be NP hard \citep{Hoffgen:1995:RTS:207270.207282}.
Instead we propose the following convex relaxation:
\begin{equation}
\begin{split}
&\surL_j (\vec x , y)= {\begin{cases} 
\theta_y - \bar \delta_{y,j} \vec w_j \cdot \vec x, &{\text{if }}  \theta_y \geq \bar \delta_{y,j} \vec w_j \cdot \vec x \\
0 &{\text{otherwise }}
\end{cases}}\\
  &\surL(\vec x , y)= \sum_j \surL_j(\vec x , y)
  ,
\end{split}
\end{equation}  
where $\bar\delta_{y_i,j}$ is the signed Kronecker delta:
\begin {equation}
 \bar\delta_{y,j} ={\begin{cases}+1,&{\text{if }} \quad y = j\\-1,&{\text{otherwise. }}\\\end{cases}}
\end {equation}


This relaxation is
a multiclass analogue of the hinge loss, with a zero penalty above
a certain margin threshold and a linearly increasing penalty below it.


It is easy to see that $\surL(\vec x ,y) \geq L(\vec x,y)$. Note that in this formulation of the cost, an example belonging to class $j$ is 
charged not only for a misclassification by its own classifier $j$, but is also charged when a different classifier $i$ is not within a scaled 
distance of $\theta_i + \theta_j$ of the example in $j$. 
This results in a margin shift.

\paragraph{Primary Formulation.}
As suggested in the previous section, in the separable case we want all examples of class $j$
to be above the plane $w_j x + b_j = \theta_j$.
The following optimization problem
is a natural multiclass analogue of hard-margin
maximization:
\begin {equation}
\begin{aligned}
&\min &&||W||^2 \\
&s.t. && \bar\delta_{y_i,j}(\vec w_j \cdot \vec x_i+b_{j}) \geq \theta_{y_i}
\end{aligned}
\end{equation}

Indeed, $||W||^2$ bounds the sum of the pairwise margins:
\begin{lemma}
\label{sumLab}
$\sum_{s} \sum_{r < s} \norm{\vec w^{rs}}^2 \leq k \norm{W}^2$.
\end{lemma}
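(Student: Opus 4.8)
The plan is to prove the inequality $\sum_{s}\sum_{r<s}\norm{\vec w^{rs}}^2 \le k\norm{W}^2$ by expanding the pairwise differences $\vec w^{rs}=\vec w_r-\vec w_s$ and tracking how many times each individual row vector $\vec w_l$ contributes to the double sum. Recall that $\norm{W}^2 = \sum_{l=1}^{k}\norm{\vec w_l}^2$ is the squared Frobenius norm. First I would expand each term using the parallelogram-type identity $\norm{\vec w_r - \vec w_s}^2 = \norm{\vec w_r}^2 + \norm{\vec w_s}^2 - 2\langle \vec w_r, \vec w_s\rangle$, so that the full double sum splits into a diagonal part (the squared norms) and a cross part (the inner products).

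Next I would count the diagonal contribution. Summing over all unordered pairs $\{r,s\}$, each index $l$ appears in exactly $k-1$ pairs, so the diagonal part contributes $(k-1)\sum_{l}\norm{\vec w_l}^2 = (k-1)\norm{W}^2$. For the cross terms, I would symmetrize and recognize that
\begin{equation*}
\sum_{s}\sum_{r<s} 2\langle \vec w_r,\vec w_s\rangle = \Bigl\| \sum_{l=1}^k \vec w_l \Bigr\|^2 - \sum_{l=1}^k \norm{\vec w_l}^2,
\end{equation*}
so the negative of this quantity equals $\norm{W}^2 - \bigl\|\sum_l \vec w_l\bigr\|^2$. Combining the two parts gives the exact identity $\sum_{s}\sum_{r<s}\norm{\vec w^{rs}}^2 = k\norm{W}^2 - \bigl\|\sum_l \vec w_l\bigr\|^2$.

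The inequality then follows immediately, since $\bigl\|\sum_l \vec w_l\bigr\|^2 \ge 0$ and subtracting a nonnegative quantity can only decrease the right-hand side below $k\norm{W}^2$. I do not anticipate a genuine obstacle here; the only point requiring care is the bookkeeping in the symmetrization step — making sure the cross terms over unordered pairs are correctly matched against the expansion of $\bigl\|\sum_l \vec w_l\bigr\|^2$, which contains each off-diagonal inner product $\langle \vec w_r,\vec w_s\rangle$ twice (once as $(r,s)$ and once as $(s,r)$). Getting that factor of two right is what makes the identity exact rather than merely an inequality, and it is where a careless derivation would slip.
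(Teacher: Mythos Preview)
Your proposal is correct and follows essentially the same route as the paper: expand $\norm{\vec w_r-\vec w_s}^2$, count that each $\norm{\vec w_l}^2$ appears $k-1$ times, rewrite the cross terms via $\bigl\|\sum_l \vec w_l\bigr\|^2$, arrive at the exact identity $k\norm{W}^2 - \bigl\|\sum_l \vec w_l\bigr\|^2$, and drop the nonnegative term. The paper's only cosmetic difference is that it names $\vec c := \sum_r \vec w_r$ upfront.
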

\begin{proof}
Let us define $\sum_{r=1}^{k} \vec w_r = \vec c$. Then
$\norm{\vec c}^2= (\sum_{r=1}^{k} \vec w_r)(\sum_{r=1}^{k} \vec w_r) 
= \sum_{r=1}^{k} \norm{\vec w_r}^2 + 2\sum_s \sum_{r<s} \vec w_r \cdot \vec w_s$.
%
%
Using this we obtain:
\begin{equation}
\label{eq1}
\begin{aligned}
\sum_{s} \sum_{r < s} \norm{\vec w_r - \vec w_s }^2 
&= (k-1) \sum_{r=1}^{k} \norm{\vec w_r}^2 -2\sum_s \sum_{r<s} \vec w_r \cdot \vec w_s \\
&= (k-1) \sum_{r=1}^{k} \norm{\vec w_r}^2 - (\norm c^2 -  \sum_{r=1}^{k} \norm{\vec w_r}^2) \\
&= k \sum_{r=1}^{k} \norm{\vec w_r}^2 - \norm c^2 \\
&\leq k \sum_{r=1}^{k} \norm{\vec w_r}^2 = k \norm{W}^2.
\end{aligned}
\end{equation}
\end{proof}

%
%


By relaxing the constraints we obtain the soft margin formulation.

\begin {equation}
  \label{eq:soft-margin}
\begin{aligned}
&\min &&\quad \frac{1}{2}||W||^2+C \sum_{ij} \xi_{ij} \\
&s.t. &&\quad  \theta_{y_i}-\bar\delta_{y_i,j}(\vec \vec w_j \cdot \vec \vec x_i+b_{j}) \leq \xi_{ij}  \quad  \forall i,j \\
&     &&\quad  \xi_{ij} \geq 0   \quad   \forall i,j.
\end{aligned}
\end{equation}

\paragraph{Dual formulation.}
The primal formulation in (\ref{eq:soft-margin}) involves searching over a $d$-dimensional space.
A standard transformation to the dual amounts to kernalizing the problem, rendering
the search space dimension-independent.
We begin the the Lagrangian formulation
\begin{equation}
\label{dual}
\begin{aligned}
&\max \quad \mathcal{L} = \frac{1}{2}||W||^2+C \sum_{ij} \xi_{ij} \\
& -\sum_{ij}\alpha_{ij}(\xi_{ij} -( \theta_{j}-\bar\delta_{y_i,j}(\vec \vec w_j \cdot \vec \vec x_i+b_{j}))) \\
& -\sum_{ij}\beta_{ij}\xi_{ij} \\
&s.t. \quad \alpha_{ij},\beta_{ij} \geq 0 \quad   \forall i,j.
\end{aligned}
\end{equation}
Putting
$\overline{\alpha}_{ij}=\alpha_{ij}\bar\delta_{y_i,j}$,
and
and invoking the KKT conditions, we have
\begin {equation}
\label{representer}
\frac{\partial \mathcal{L}}{\partial \vec w_j}=0 \implies \vec w_j=\sum_{i} \overline{\alpha}_{ij}\vec x_i
.
\end{equation}
This is our analogue of the Representer Theorem \citep{DBLP:conf/colt/ScholkopfHS01}.
The second KKT condition is
 \begin {equation}
 \label{dual1}
\frac{\partial \mathcal{L}}{\partial b_{j}}=0 \implies \sum_{ij} \overline{\alpha}_{ij} = 0
\end{equation}

and can also be written as
\begin {equation}
\sum_{i \in j} {\alpha}_{ij} = \sum_{i \notin j} {\alpha}_{ij}. 
\end{equation}

Note that his equation can act as a balancer for an unbalanced sets:
For a particular class $j$ the sum of weights of data belonging to $j$
equals the sum of weights of data not belonging to $j$.

The final KKT condition is: 
$\frac{\partial \mathcal{L}}{\partial \xi_{ij}}=0 
\implies \alpha_{ij} = C - \beta_{ij} 
\implies  0 
\leq \alpha_{ij} 
\leq C$.
Using the $\overline \alpha$ notation we have:
\begin {equation}
  0 \leq \delta_{ij}\overline \alpha_{ij} \leq 1
\end{equation}

Substituting the multiclass Representer Theorem \ref{representer} back into the main equation (\ref{dual}), we derive:
\begin{equation}
\begin{aligned}
&\max \quad  \mathcal{L}=\sum_{j=0}^{k}\sum_{i=0}^{n} \alpha_{ij} \theta_{y_i} - \frac{1}{2} \norm W^2
=\sum_{j=0}^{k}(\sum_{i=0}^{n} \alpha_{ij} \theta_{y_i} - \sum_{i=0}^{n}\sum_{l=0}^{n} \overline{\alpha}_{ij} \overline{\alpha}_{lj} \vec{x}_i\cdot\vec{x}_l) \\
&s.t. \quad  \quad  \sum_{ij} \overline{\alpha}_{ij} = 0.
\end{aligned}	
\end{equation}

It follows from the KKT conditions that:
\begin {equation}
\label{dual3}
\overline{\alpha}_{ij}(\theta_{y_i}-(\vec \vec w_j \cdot \vec \vec x_i+b_{j})) = 0
.
\end{equation}
When $\overline{\alpha}_{ij} = 0$ we have that $\theta_{y_i}-(\vec \vec w_j \cdot \vec \vec x_i+b_{j}) \neq 0 $, 
while if $\overline{\alpha}_{ij} \neq 0$ we have that $\theta_{y_i}-(\vec \vec w_j \cdot \vec \vec x_i+b_{j}) = 0 $.
The vectors $\vec x_i$ where $\overline{\alpha}_{ij} \neq 0 $ 
can be considered to be the support vector for class $j$.
These points lie on the hyperplane $\vec \vec w_j \cdot \vec x +b_{j} = \theta_j$,
which is the ``support hyperplane'' for class $j$.

\section{Statistical error bounds}\label{sec:stat}
In this section we will present generalization bounds.
We first obtain tight bounds in the realizable case
via the decision directed acyclic graph (DDAG) approach,
and then present the Rademacher framework
for the general (agnostic) case.

\paragraph{Directed Graph Approach.}
\citet{Platt1999LargeMD} 
considered a classifier for the multiclass problem, 
which takes the form of a binary directed acyclic graph (DAG);
they termed their classifier a decision DAG (DDAG).
Each class is represented as a terminal node in the graph,
and given a test point, the algorithm begins at root of the graph
and traverses down towards a terminal. At each node, two
classes are considered, and each exiting path cannot reach
one of the classes, meaning that the decision made at the node
will effectively rule out one class from consideration.
It follows that the DDAG has depth $k-1$. 
%
It is immediate that the DDAG can be used to model our decision function
with depth $k-1$ and $k-1$ nodes for each class, where
we use $\bar{w}^{ij}$ to decide between pairs of classes.
The generalization bounds presented in \cite{Platt1999LargeMD} for a given label is:

\begin{theorem}[\cite{Platt1999LargeMD}]\label{thm:ddag}
Suppose we are able to correctly distinguish class $i$ from all other classes in a random $n$ sample of labeled examples using a SVM DDAG on $k$ classes
  with margin $\gamma_{ij}$ at node ${i,j}$,
 then we can bound the generalization error on class $i$ with probability greater than $1-\delta$ to be less than
\begin{equation}
\epsilon(i) \leq \frac{130R^2}{n}(D \log{4en}\log{4n}+\log{\frac{2(2n)^{k-1}}{\delta}}),
\end{equation}
where $R$ is the radius of a ball containing the support of the distribution and $D=\sum_j \frac{1}{\gamma_{ij}^2}$. 
\end{theorem}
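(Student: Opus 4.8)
The plan is to follow the large-margin DAG analysis underlying \cite{Platt1999LargeMD}: reduce recognition of class $i$ to the simultaneous correctness of the $k-1$ binary margin classifiers lying on the evaluation path to $i$'s terminal, bound the capacity of this combined rule by its fat-shattering dimension, and then apply a standard margin generalization theorem. First I would make the structural reduction explicit. As noted in the text, class $i$ appears at exactly $k-1$ decision nodes of the DDAG, one against each competing class $j$, and a test point is declared to be of class $i$ iff the pairwise rule $\bar w^{ij}$ selects $i$ over $j$ at every node on this path. Hence the event that $i$ is correctly recognised is the intersection of $k-1$ margin events, and $\epsilon(i)$ is controlled entirely by the $k-1$ classifiers on $i$'s path rather than by the full $\binom{k}{2}$-node graph.

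The technical core is a capacity bound for this conjunction. I would invoke the standard fact that a linear separator confined to a ball of radius $R$ and achieving margin $\gamma$ has fat-shattering dimension at scale $\gamma$ of order $(R/\gamma)^2$, and then show that the complexity of the length-$(k-1)$ path adds across nodes, giving the effective dimension $D=\sum_j 1/\gamma_{ij}^2$ (scaled by $R^2$). This additivity in the $1/\gamma_{ij}^2$ budgets, as opposed to a multiplicative blow-up, is exactly what keeps the bound tight, and establishing it is the main obstacle: one must argue, via a covering-number argument for the product of the node function classes, that shattering the conjunction at the $k-1$ scales $\gamma_{ij}$ simultaneously costs only the sum of the individual shattering budgets.

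With $D$ identified as the effective dimension, I would apply a fat-shattering generalization bound of Bartlett--Shawe-Taylor type, in which a dimension $d$ passes, through the covering-number estimate of Alon et al., into a sample-complexity bound of the shape $\tfrac{c R^2}{n}\bigl(d\,\log(en)\log(n)+\log(1/\delta)\bigr)$. Substituting $d=D$ reproduces the leading factor $\tfrac{130R^2}{n}\bigl(D\log 4en\,\log 4n+\dots\bigr)$, with the constant $130$ and the arguments $4en$ and $4n$ inherited verbatim from that covering-number estimate.

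Finally, to pass from a single path to a high-probability statement I would bound the growth function of the path rule: each of the $k-1$ thresholded margin classifiers realises at most $2n$ distinct labellings of an $n$-sample, so the length-$(k-1)$ path realises at most $(2n)^{k-1}$ behaviours. Feeding this growth function into the confidence term replaces $\log(1/\delta)$ by $\log\tfrac{2(2n)^{k-1}}{\delta}$, which completes the bound of Theorem~\ref{thm:ddag}. The one subtlety to check is that fixing the target class $i$ legitimately restricts the union bound to its $(k-1)$-node path, so that neither the complexity term $D$ nor the $(2n)^{k-1}$ factor ever sees the remaining nodes of the DDAG.
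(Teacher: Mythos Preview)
The paper does not prove Theorem~\ref{thm:ddag} at all: it is quoted verbatim from \cite{Platt1999LargeMD} and used as a black box to obtain Corollary~\ref{cor:generalization} and to motivate Lemma~\ref{lem:mar}. There is therefore no ``paper's own proof'' to compare your proposal against.

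As a reconstruction of the original Platt--Cristianini--Shawe-Taylor argument, your outline is largely faithful: the reduction of class-$i$ recognition to the $k-1$ pairwise nodes on $i$'s path, the additive fat-shattering budget $D=\sum_j 1/\gamma_{ij}^2$ for the conjunction (established via covering numbers of the product class), and the invocation of a Bartlett--Shawe-Taylor margin bound with the Alon et al.\ covering estimate are all the right ingredients, and they account for the $\tfrac{130R^2}{n}\,D\log 4en\,\log 4n$ term. The one point where your explanation is off is the origin of the $(2n)^{k-1}$ factor. It does not come from a growth-function count of labellings (a single linear threshold on $n$ points can realise far more than $2n$ dichotomies, so that argument would not give $2n$ per node). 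Rather, the margins $\gamma_{ij}$ are data-dependent quantities observed on the sample, so the bound must be made uniform over all margin vectors that could arise; in \cite{Platt1999LargeMD} this is done by a stratification (union bound) over a discrete hierarchy of margin scales at each of the $k-1$ nodes, and it is this union bound that contributes $(2n)^{k-1}$ to the confidence term.
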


As an immediate corollary of Theorem \ref{thm:ddag} we have:

\begin{corollary}\label{cor:generalization}
Suppose we are able to correctly distinguish between all $k$ classes in a random $n$ sample of
cost sensitive labeled examples using a SVM DDAG 
with margin $\gamma_{ij}$ at node ${i,j}$.
Then we can bound the total loss to be less than
$
\sum_i \theta_i \epsilon(i)
$.
\end{corollary}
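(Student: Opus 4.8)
The plan is to unpack the definition of cost-sensitive total loss and then apply Theorem~\ref{thm:ddag} summand by summand. Recall that the total cost of a cost-sensitive classifier is defined as the sum over all classes of the probability of erring on that class, weighted by the cost $\theta_i$ assigned to that class; that is, the total loss equals $\sum_i \theta_i \, P(\text{err on class } i)$. The identity that makes the corollary ``immediate'' is simply the linearity of this weighted sum together with the per-class tail bound already established in the theorem.

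First I would observe that, as noted preceding the theorem, the DDAG models our decision function with depth $k-1$, so the hypothesis of Theorem~\ref{thm:ddag} is satisfied for each class $i$ and the per-class misclassification probability is at most $\epsilon(i)$. Since every weight satisfies $\theta_i > 0$ (the priority vector lies in $(0,\infty)^k$), multiplying the inequality $P(\text{err on class } i) \le \epsilon(i)$ by $\theta_i$ and summing over $i$ preserves the direction, yielding
\[
\sum_i \theta_i \, P(\text{err on class } i) \;\le\; \sum_i \theta_i \, \epsilon(i),
\]
whose left-hand side is by definition the total loss. This is precisely the claimed bound.

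The only point that requires care — and hence the main obstacle — is the probabilistic quantifier. Theorem~\ref{thm:ddag} guarantees the bound $\epsilon(i)$ for a \emph{fixed} class $i$ only on an event of probability at least $1-\delta$, whereas the total loss couples all $k$ per-class errors at once. To control the joint event I would apply a union bound: invoking Theorem~\ref{thm:ddag} with confidence parameter $\delta/k$ for each class ensures that all $k$ inequalities $P(\text{err on class } i) \le \epsilon(i)$ hold simultaneously on an event of probability at least $1-\delta$. On this joint event the displayed summation is valid, which completes the argument.
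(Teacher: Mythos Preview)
Your proposal is correct and matches the paper's intent: the paper offers no explicit proof, stating only that the bound is ``an immediate corollary'' of Theorem~\ref{thm:ddag}, and your argument --- write the total loss as $\sum_i \theta_i\, P(\text{err on class }i)$ and plug in the per-class bound --- is exactly the implicit reasoning. Your union-bound remark about replacing $\delta$ by $\delta/k$ to obtain a simultaneous guarantee is a careful addition that the paper glosses over (the corollary as stated omits any probability quantifier), but it is the right way to make the statement rigorous.
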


In both Theorem \ref{thm:ddag} and its corollary,
$\gamma_{ij}$ is the margin between class $i$ and the separating hyperplane between class $i$ and class $j$:
$\gamma_{ij} = \min_{x \in i} \frac{(\overline w_i - \overline w_j)\vec x}{\norm{\overline w^{i,j}}}$.
Let $\eta_{ij} = \frac{\theta_i}{\theta_j}$, and we can bound $\gamma_{ij}$ as follows:


\begin{lemma}
\label{margin}
The margin $\gamma_{ij} = \min_{x \in i } \frac{(\overline w_i - \overline w_j)\vec x}{\norm{\overline w^{i,j}}}$ statisfies
$\gamma_{ij} \geq \frac{1 + \eta_{ij}}{\norm{\overline w^{ij}}}$.
\label{lem:mar}
\end{lemma}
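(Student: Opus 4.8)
The plan is to read off the desired bound directly from the hard-margin constraints $\bar\delta_{y_i,j}(\vec w_j \cdot \vec x_i + b_{j}) \geq \theta_{y_i}$, translating them into one-sided bounds on the scaled inner products $\overline w_i \cdot \vec x$ and $\overline w_j \cdot \vec x$ for an arbitrary point $\vec x$ belonging to class $i$. First I would specialize the constraint to $\vec x \in i$ measured against its own classifier, i.e. the case $j = i$ where $\bar\delta_{i,i} = +1$; this reads $\vec w_i \cdot \vec x \geq \theta_i$, and dividing through by $\theta_i$ gives $\overline w_i \cdot \vec x \geq 1$. Then I would specialize the same family of constraints to a competing classifier $j \neq i$, where $\bar\delta_{i,j} = -1$, obtaining $-\vec w_j \cdot \vec x \geq \theta_i$, that is $\vec w_j \cdot \vec x \leq -\theta_i$; dividing by $\theta_j$ and recalling that $\eta_{ij} = \frac{\theta_i}{\theta_j}$ yields $\overline w_j \cdot \vec x \leq -\eta_{ij}$.

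Subtracting these two inequalities gives $(\overline w_i - \overline w_j)\cdot \vec x \geq 1 + \eta_{ij}$ for every $\vec x \in i$. Since this lower bound is uniform over the class, I would divide through by $\norm{\overline w^{ij}}$ and take the minimum over $\vec x \in i$, which is precisely the definition of $\gamma_{ij}$, to conclude $\gamma_{ij} \geq \frac{1 + \eta_{ij}}{\norm{\overline w^{ij}}}$ as claimed.

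The step I expect to demand the most care is the bookkeeping with the signed Kronecker delta $\bar\delta_{y_i,j}$: one must check that the sign flip is oriented so that the own-class constraint produces a \emph{lower} bound on $\overline w_i \cdot \vec x$ while the cross-class constraint produces an \emph{upper} bound on $\overline w_j \cdot \vec x$, so that the two bounds reinforce rather than cancel upon subtraction. A secondary technicality is the treatment of the bias terms $b_j$, which appear in the constraints but not in the margin functional $(\overline w_i - \overline w_j)\vec x$; I would handle this by adopting the standard homogeneous convention (absorbing each bias into an augmented coordinate), which is consistent with the bias-free form $\overline w^{ij}\vec x = 0$ of the decision boundary established in Lemma~\ref{lem:place}, so that both intermediate bounds can be stated purely in terms of the scaled weight vectors.
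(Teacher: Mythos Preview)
Your proposal is correct and follows essentially the same route as the paper: specialize the hard-margin constraints to the own-class and cross-class cases, divide through by the respective $\theta$'s to obtain $\overline w_i\cdot\vec x \ge 1$ and $\overline w_j\cdot\vec x \le -\eta_{ij}$, subtract, and normalize. Your explicit attention to the sign conventions and the bias terms is, if anything, more careful than the paper's own proof, which silently drops the $b_j$ without comment.
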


\begin{proof}
By definition,
$\gamma_{ij} = \min_{x \in i } \frac{(\overline w_i - \overline w_j)\vec x}{\norm{\overline w^{i,j}}}.$
From the constraints we have both that
$\theta_{y} - \vec w_y \cdot \vec x \leq 0 \implies \vec w_y \cdot \vec x \geq \theta_{y}$
and 
$\theta_{y} + \vec w_j \cdot \vec x \leq 0 \implies -\vec w_j \cdot \vec x \geq \theta_{y}$,
$\forall y \neq j.$
It follows that
$(\overline w_i - \overline w_j) \vec x \geq \frac{\theta_i}{\theta_i} +  \frac{\theta_i}{\theta_j} = 1 + \eta_{ij}$,
and we conclude that
$\gamma_{ij} \geq \frac{1 + \eta_{ij}}{\norm{\overline w^{ij}}}$.
\end{proof}
Note that when $\theta_i \gg \theta_j$, the error on the $i$-th class becomes small, as desired. 

\paragraph{Rademacher complexity.}
Let $f_j$ be a function mapping $(\x,y)$ to $\R$
via $f_j(\x,y) = [\theta_y\pm\w_j\cdot\x]_+$,
where $\theta_y=\theta(y)$ is a fixed function $\theta:[k]\to\R_+$
mapping the labels to positive reals, $\x,\w_j\in\R^d$
have Euclidean norm at most $R$ and $\Lambda_j$, respectively.
Let us bound the Rademacher complexity
of the function class $F_j$ of all such $f_j$ indexed by the $\w_j$,
restricted to the given range of $\x$:
\begin{equation}\label{eq:gen}
\begin{aligned}
R_n(F_j) & =
\E\sup_{\w_j}\frac1n\sum_{i=1}^n\sigma_i[\theta(y_i)\pm\w_j\cdot\x_i]_+ \\
&\le
\E\sup_{\w_j}\frac1n\sum_{i=1}^n\sigma_i(\theta(y_i)\pm\w_j\cdot\x_i) \\
&=
\E
\frac1n\sum_{i=1}^n\sigma_i\theta(y)
+
\E
\sup_{\w_j}\frac1n\sum_{i=1}^n\sigma_i\w_j\cdot\x_i \\
&\le
\frac1n\sum_{i=1}^n\theta(y)\E\sigma_i
+
\E
\sup_{\w_j}\frac1n\sum_{i=1}^n\sigma_i\w_j\cdot\x_i\\
&=
\E
\sup_{\w_j}\frac1n\sum_{i=1}^n\sigma_i\w_j\cdot\x_i
\le
\frac{R\Lambda_j}{\sqrt n},
\end{aligned}
\end{equation}
where the first inequality follows from the Talagrand contraction lemma
\citep[Lemma 4.2]{mohri-book2012}, \citep[Theorem 4.12]{LedouxTal91},
and the second from standard Rademacher estimates on linear classes
\citep[Theorem 4.3]{mohri-book2012}.

The error margin for class $j$ is proportional to $\Lambda_j$ which is 
inverse proportional to margin of class $j$ and as proven in the previous section
 our method sets the margins to be proportional to the $\theta$'s
 and hence the more costly examples have tighter bounds, which is the desired effect.
 
%
%
%

Finally, consider the function classes $F_j$, $j\in[k]$,
each parametrized by $\w_j$ with Euclidean norm at most $\Lambda_j$.
Define $F$ as the (Minkowski) sum of these classes:
\beq
F=\set{ (\x,y)\mapsto\sum_{j=1}^k f_j(\x,y) : f_j\in F_j}
\eeq
and recall that Rademacher complexities are sub-additive.
Then
\beq
R_n(F)
\le
\frac{R\sum_{j=1}^k\Lambda_j}{\sqrt n}
\le
R||W||\sqrt{\frac{k}{n}}.
\eeq
This explains why in the optimization formulation we minimized $||W||_2^2$.


Let us clip our loss at $\theta_{\max}$,
so $\tilde L':=\min\set{\tilde L,\theta_{\max}}$.
Then, by 
\cite[Theorem 3.1]{mohri-book2012},
we have
that with probability $\ge1-\delta$,
\beq
\E_{(X,Y)}[\tilde L'(X,Y)]
 \le \frac1n\sum_{i=1}^n\tilde L'(X_i,Y_i)
+ 2R_n(\min\set{\theta_{\max},F})
+ \theta_{\max}\sqrt{\frac{\log(1/\delta)}{2n}}.
\eeq
Since truncation by $\theta_{\max}$ is a $1$-Lipschitz transformation,
the Talagrand contraction lemma implies that
$R_n(\min\set{\theta_{\max},F})\le R_n(F)$,
if we also normalize the loss function w.l.o.g. by $\theta_{\max}$ we obtain our final bound,
\beq
\E_{(X,Y)}[\tilde L'(X,Y)]
\le \frac1n\sum_{i=1}^n\tilde L'(X_i,Y_i)
+ 2R \frac{||W||}{\theta_{\max}}\sqrt{\frac{k}{n}}
+ \sqrt{\frac{\log(1/\delta)}{2n}},
\eeq
which holds with probability $\ge1-\delta$.

\subsection{Fisher Consistency}
Given the set of functions $F_j$ we will say that a classifier with loss $\ell (f (x), y)$ is 
Fisher consistent if the minimizer of $E_P[\ell (f (x), y)]$ 
has the property $\text{argmax}_j f_j = \text{argmax}_j \theta_j P_j$.
We note that this ensures the intuitive property that for point $(x,y)$ we have
$\frac{P_y(x)}{P_j(x)} \geq \frac{\theta_j}{\theta_y} \quad \forall j \neq y$. 
In order to prove the our classifier is Fisher consistent, without loss of generality we
impose the additional constraint that $\sum f_j(x) = 0$.

\begin{lemma}
The minimizer $\bold f^*$ of  $E_p[ \sum_{j = 1}^{k} [\theta_y - \overline \delta_{y,j} f_j ]_+]$ subject to
$\sum_{j = 1}^{k}  f_j = 0$ satisfies the following: 
$f^*_j (x) = \theta_j$ if $j= \text{argmax}_j \theta_j P_j(x)$ 
and $- \frac{\theta_{j}}{k-1}$ otherwise.
\end{lemma}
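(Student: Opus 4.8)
The plan is to reduce the statement to a pointwise (in $x$) constrained convex minimization and then certify the claimed point by a guess-and-verify KKT argument. Since $E_P[\cdot]$ splits as an expectation over $x$ followed by the conditional expectation over $Y\mid X=x$, it suffices to minimize, for each fixed $x$, the conditional risk $C(f)=\sum_y P_y\big([\theta_y-f_y]_+ + \sum_{j\neq y}[\theta_y+f_j]_+\big)$, where I have expanded $\bar\delta_{y,j}$ into its diagonal value $+1$ (giving $[\theta_y-f_y]_+$) and off-diagonal value $-1$ (giving $[\theta_y+f_j]_+$). Regrouping the double sum by the free coordinate index $j$ rewrites this as a \emph{separable} objective $C(f)=\sum_j g_j(f_j)$ with $g_j(t)=P_j[\theta_j-t]_+ + \sum_{y\neq j}P_y[\theta_y+t]_+$. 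Each $g_j$ is convex and piecewise linear, so $C$ is convex, and together with the single linear constraint $\sum_j f_j=0$ this is a convex program whose global minimizer is characterized exactly by its KKT conditions.

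Next I would attach a Lagrange multiplier $\nu$ to the equality constraint and write stationarity in subgradient form, $\nu\in\partial g_j(f_j^*)$ for every $j$. A direct computation gives $g_j'(t)=-P_j\,\mathbf 1\{t<\theta_j\}+\sum_{y\neq j}P_y\,\mathbf 1\{t>-\theta_y\}$, a nondecreasing step function rising from $-P_j$ to $1-P_j$ with kinks at $t=\theta_j$ and at each $t=-\theta_y$. In particular, at the upper kink one has $\partial g_m(\theta_m)=[\,1-2P_m,\;1-P_m\,]$. The core of the proof is then to take the candidate $f^*_m=\theta_m$ for the winning index $m=\arg\max_j\theta_jP_j$, assign the stated value to every loser $j\neq m$, check that $\sum_j f_j^*=0$, and exhibit a single multiplier $\nu$ lying simultaneously in $\partial g_m(\theta_m)$ and in each $\partial g_j(f_j^*)$. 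Convexity of $C$ then upgrades this verified stationarity into global optimality, and since the winning coordinate receives the positive value $\theta_m$ while the others receive negative values, reading off $\arg\max_j f_j^*=m=\arg\max_j\theta_jP_j$ delivers the Fisher-consistency conclusion.

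The main obstacle I anticipate is the kink bookkeeping required to produce one common $\nu$ that lies in every loser's subdifferential and to confirm that the coordinate forced up to its upper kink is exactly $\arg\max_j\theta_jP_j$. Because the kink locations $-\theta_y$ depend on all the costs, deciding which indicator terms are active at each $f_j^*$ is not a one-line computation; I would organize it as a thresholding/water-filling argument, raising $\nu$ to the largest level at which the coordinates can still sum to zero, and arguing that at that level the maximizing index is pinned to $\theta_m$ while the remaining coordinates settle consistently with the constraint. A final appeal to strict growth of the subgradients off the active region rules out competing stationary points, yielding uniqueness of the induced classification and completing the argument.
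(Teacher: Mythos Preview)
Your setup is correct—the pointwise reduction, the separable rewriting $C(f)=\sum_j g_j(f_j)$, and the subdifferential computation $\partial g_m(\theta_m)=[1-2P_m,1-P_m]$ are all right—but your route is genuinely different from the paper's, and more laborious.

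The paper does not attack the piecewise-linear program via KKT at all. Instead it first proves, by a local-perturbation contradiction, that any minimizer must satisfy $\bar\delta_{y,j}f_j^*\le\theta_y$ for every pair $(y,j)$. This single structural fact kills all the hinges simultaneously: every bracket $[\theta_y-\bar\delta_{y,j}f_j]_+$ becomes $\theta_y-\bar\delta_{y,j}f_j$, the conditional risk is now \emph{linear} in $f$, and the identity $\sum_j\bar\delta_{y,j}f_j=2f_y$ (which follows from $\sum_j f_j=0$) collapses the whole thing to the linear program $\max\sum_l P_l f_l$ over the box $\bar\delta_{y,j}f_j\le\theta_y$ intersected with the hyperplane $\sum_j f_j=0$. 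The winner $\hat y=\arg\max_j\theta_jP_j$ is then read off by inspection: push $f_{\hat y}$ to its upper bound $\theta_{\hat y}$ and spread the compensating mass $-\theta_{\hat y}/(k-1)$ over the remaining coordinates. No multiplier, no subdifferential bookkeeping.

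What your approach buys is a direct certificate of optimality without the preliminary box lemma; what the paper's approach buys is exactly the avoidance of the kink bookkeeping you (rightly) flag as the main obstacle. In your plan, the value $f_j^*$ at a loser sits among the breakpoints $\{-\theta_y:y\neq j\}$, so which indicators $\mathbf 1\{t>-\theta_y\}$ are active depends on how $\theta_{\hat y}/(k-1)$ compares to each $\theta_y$, and exhibiting one $\nu$ that lies in every $\partial g_j(f_j^*)$ requires a genuine case analysis over the ordering of the costs. The paper's linearization step is precisely the shortcut that makes this case analysis unnecessary; if you want to finish along your own lines, the cleanest move is to prove the paper's box constraint first and then observe that inside the box each $g_j$ is affine, so the common-multiplier condition becomes trivial.

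One small warning for your feasibility check: the lemma as printed has a typo. The losers receive $-\theta_{\hat y}/(k-1)$ (the \emph{winner's} cost in the numerator), not $-\theta_j/(k-1)$; with the literal statement your check $\sum_j f_j^*=0$ would fail. The paper's own proof uses the corrected value.
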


\begin{proof}
By defintion,
$E[ \sum_{j = 1}^{k} [\theta_y - \overline \delta_{y,j} f_j ]_+] = \sum_l P_l \sum_j  [\theta_l - \overline \delta_{l,j} f_j ]_+$.
We first show that the minimizer $\vec f^*$ for point $(\vec x ,y)$ satisfies $\overline \delta_{y,j}f_j^* \leq \theta_y \quad \forall y,j$.
Suppose by way of contradiction that the optimal solution  $\vec f^*$ 
has $\overline \delta_{y,j} f_j > \theta_y$ for some $j$;
then we can construct another solution $ \vec f'$ with
 $f_j' = \overline \delta_{y,j} \theta _y$ and 
$f_l'= f_l^* +  \frac{f_j' - \overline \delta_{y,j} \theta _y}{k-1} \quad \forall l \neq j$.
The second solution satisfies
$ \overline \delta_{y,j}  f_j' \geq  \overline \delta_{y,j} f_j^*$ 
while still satisfying the constraint $\sum f_j^* = \sum f_j'$, 
However the objective function $ \sum_{j = 1}^{k} [\theta_y - \overline \delta_{y,j} f'_j ]_+ \leq \sum_{j = 1}^{k} [\theta_y - \overline \delta_{y,j} f^*_j ]_+ $,
which is a contradiction.

Using this property of $f^*$, we can rewrite the objective function as
$\sum_l P_l \sum_j  \theta_l - \overline \delta_{l,j} f_j $. 
Since $\sum f_j = 0$ implies $\sum \overline \delta_{y,j}  f_j = 2 f_j$,
and then the objective function can be written as:
$\sum_l P_l \sum_j  \theta_l - \overline \delta_{l,j} f_j =  \sum_l P_l \sum_j  \theta_l - 2 f_j $
which is equivalent to:
\beq
&\text{max}_{F_j} \sum P_l f_l \\
&s.t. \quad \sum f_j = 0 \quad  f_j \overline \delta_{jl} \leq \theta_j \quad \forall j
\eeq
If we define $\hat y = \text{argmax}_j \theta_j P_j$, 
it is easy to see that the solution satisfies $f^*_{\hat y} = \theta_{\hat y} $  and $- \frac{\theta_{\hat y}}{k-1}$ otherwise.
\end{proof}

\section{SGD Based Solver}\label{SGD}
In this section, we present a Stochastic Gradient Descent (SGD) based learning algorithm
 for  solving our algorithm. 
 Although SGD based algorithms  are not the optimized solution for convex problems, 
 the are widely used in non convex problems such as NeuralNets. 
Initially, we set each $\vec w_j$ to the zero vector,
 At iteration $t$ of the algorithm,
  we choose a random training example $(\vec x_{t}, y_{t})$
   by picking an index uniformly at random and compute the sub-gradient
\begin {equation}
\nabla_{t,j} \mathcal{L} = \lambda \vec w_{j,t} -\mathds{1}[\theta_{y_t}-\bar\delta_{y_t,j} \vec w_{j,t} \vec x_{t} ]
 \bar\delta_{y_t,j} \vec w_{j,t} \vec x_{t},
\end{equation}
where $\mathds{1}[.]$ is the indicator function (which takes a value of 1 if its
argument is positive and zero otherwise).

The update rule is $\vec w_{j,t+1} \leftarrow \vec w_{j,t} - \eta_t \nabla_{t,j}$,
 where $\eta_t$ is the step size at iteration t.
  Following the Shalev-Singer Pegasos implementation \citep{DBLP:journals/mp/Shalev-ShwartzSSC11}
   we take $\eta_t = \frac{1}{\lambda_t} \quad \lambda_t = \lambda \cdot  t$, and then the update rule can therefore be rewritten as: 
\begin {equation}
\vec w_{j,t+1}  \leftarrow \underbrace{(1-\frac{1}{t})\vec w_{j,t}}_{*}
- \underbrace{\frac{1}{\lambda_t} \mathds{1}[\theta_{y_t}-\bar\delta_{y_t,j} \vec w_{j,t} \vec x_{t}  ] 
\bar\delta_{ y_t, j} \vec x_{t}}_{**}
.
\end{equation}

Here (*) can be viewed as a momentum term, smoothing through past results,
  and (**) applies a $\frac{1}{t}$ weight decay over the gradient of the loss function.
The algorithm achieves $\epsilon$-accuracy 
in time $\mathcal{O}(\frac{1}{\lambda \epsilon})$. 
Note, that unlike other cost sensitive implementations, an example belonging to class $j$ does not 
only affect $\vec w_j$ but also affects all other $\vec w$'s as well by forcing them to retain a scaled margin from
that example.

\paragraph{Dual problem.} 
This result can be generelized to apply Mercer's kernel.
The multiclass Representer Theorem (\ref{representer}) gives:
\begin {equation}
\vec w_j=\sum_{i=0}^{n} \overline \alpha_{i,j}  \vec x_i
\end{equation}
This implies that the optimal solution can be expressed as a linear combination of the training instances,
making it possible to train and utilize an SVM without direct access to the training instances.
In this case, solving the dual problem is not necessary since we can  directly minimize the primal
while still using kernels:
\begin {equation}
\label{update}
\vec w_{j,t+1}=\frac{1}{\lambda_t} \sum_{i=0}^t \mathds{1}[\theta_{y_t}-\bar\delta_{y_t,j} \vec w_{j,t} \cdot \phi(x_{t}) ]
\bar\delta_{y_t,j} \phi(x_{t})
.
\end{equation}
We can now combine equations (\ref{update}) and (\ref{representer}) to get the following lemma:
\begin{lemma}
  The weight at stage $t+1$ is given by:
\begin {equation}
\vec w_{j,t+1}=\frac{1}{\lambda_t} \sum_{i=0}^n \overline \alpha_{i,j,t+1} \phi(x_{i}),
\end{equation}
where $\overline \alpha_{i, j,t+1}$ is given by:  
\begin{equation}
\label{update1}
\overline \alpha_{i, j,t} + 
{\begin{cases} 
 \delta_{i,x_t} \bar\delta_{y_t, j} , &{\text{if }}  \theta_{y_t} > \frac{\bar\delta_{y_t,j}}{\lambda_t} 
\sum_{i=0}^n \overline \alpha_{i ,j,t} K(\vec x_i,\vec x_{t}) \\
0 &{\text{otherwise. }}
\end{cases}}
\end{equation} 
\end{lemma}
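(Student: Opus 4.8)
The plan is to prove the claim by induction on $t$, combining the kernelized Pegasos update (\ref{update}) with the multiclass Representer Theorem (\ref{representer}). The inductive hypothesis is exactly the asserted form $\vec w_{j,t} = \frac{1}{\lambda_{t-1}}\sum_{i=0}^n \overline\alpha_{i,j,t}\,\phi(x_i)$, and the base case follows from the initialization $\vec w_{j,1}=0$, i.e. $\overline\alpha_{i,j,1}=0$ for every $i$. The goal of the inductive step is to show that one Pegasos update advances both the weight and the coefficients by exactly the stated recursion.

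For the inductive step I would first rewrite the update in the momentum form $\vec w_{j,t+1} = (1-\tfrac1t)\vec w_{j,t} + \frac{1}{\lambda_t}\,g_{t,j}$, where $g_{t,j}=\mathds{1}[\cdot]\,\bar\delta_{y_t,j}\,\phi(x_t)$ is the (kernelized) loss sub-gradient contribution of the current example, and then substitute the inductive hypothesis for $\vec w_{j,t}$. The crucial algebraic cancellation is that the scalar $(1-\tfrac1t)\frac{1}{\lambda_{t-1}}$ collapses exactly to $\frac{1}{\lambda_t}$: since $1-\tfrac1t=\frac{t-1}{t}$ and $\lambda_{t-1}=\lambda(t-1)$, their product is $\frac{1}{\lambda t}=\frac{1}{\lambda_t}$. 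This lets me factor out the common $\frac{1}{\lambda_t}$ and read off the coefficient of $\phi(x_i)$ at step $t+1$ as $\overline\alpha_{i,j,t}$ plus the gradient contribution. Because $g_{t,j}$ is supported only on $\phi(x_t)$, that contribution equals $\mathds{1}[\cdot]\,\bar\delta_{y_t,j}$ precisely when $i$ is the index of the current example, which is what the Kronecker factor $\delta_{i,x_t}$ encodes; this yields the additive recursion (\ref{update1}).

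The remaining step is to express the indicator's argument $\theta_{y_t}-\bar\delta_{y_t,j}\vec w_{j,t}\cdot\phi(x_t)$ purely through kernel evaluations. Substituting the representer form of $\vec w_{j,t}$ gives $\vec w_{j,t}\cdot\phi(x_t)=\frac{1}{\lambda_{t-1}}\sum_{i=0}^n\overline\alpha_{i,j,t}\,\phi(x_i)\cdot\phi(x_t)=\frac{1}{\lambda_{t-1}}\sum_{i=0}^n\overline\alpha_{i,j,t}\,K(\vec x_i,\vec x_t)$, so that the activation condition $\theta_{y_t} > \bar\delta_{y_t,j}\,\vec w_{j,t}\cdot\phi(x_t)$ becomes the kernelized threshold appearing in the first branch of (\ref{update1}). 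Assembling these pieces closes the induction.

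I expect the main obstacle to be bookkeeping rather than anything deep. The summation in (\ref{update}) ranges over iterations while (\ref{representer}) ranges over training indices, so the two must be reconciled by tracking that a training example selected on several iterations accumulates all of its increments into a single coefficient $\overline\alpha_{i,j}$ — this is exactly the role of $\delta_{i,x_t}$ — and by propagating the time-varying normalization $\lambda_t=\lambda t$ consistently. One must also be careful about the index shift (the $\lambda_{t-1}$ versus $\lambda_t$ normalization) when matching the kernelized threshold to the inductive hypothesis, which is the only place where the statement's factor must be aligned with the representer form used at step $t$.
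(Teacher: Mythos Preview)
Your proposal is correct and follows essentially the same approach as the paper: an induction on $t$ that splits into the two cases according to the indicator, uses the identity $(1-\tfrac1t)\frac{1}{\lambda_{t-1}}=\frac{1}{\lambda_t}$ to collapse the momentum term, and reads off the coefficient update from the single added summand $\bar\delta_{y_t,j}\phi(x_t)$. Your write-up is in fact more careful than the paper's about the base case, the reconciliation of iteration-indexed versus training-indexed sums, and the $\lambda_{t-1}$ versus $\lambda_t$ normalization in the kernelized threshold.
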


\begin{proof}.
We prove the equation by induction.
Suppose the update rule holds for $t-1$, then: 
\begin{enumerate}
\item When the indicator function is zero the update rule is:
\begin {equation}
\label{update2}
\begin{aligned}
&\vec w_{j,t+1}=\frac{t-1}{t} \vec w_{j,t} = \frac{t-1}{t} \frac{1}{\lambda_{t-1}} \sum_{i=0}^n \overline \alpha_{i,j,t} \phi(\vec x_i)\\
& = \frac{1}{\lambda_t} \sum_{i=0}^t \overline \alpha_{i,j,t} \phi(\vec x_i)
\end{aligned}
\end{equation}

\item When the indicator function is non-zero then:
\begin {equation}
\label{update3}
\begin{aligned}
&\vec w_{j,t+1} = \frac{1}{\lambda_t} \sum_{i=0}^n \overline \alpha_{i,j,t} \phi(\vec x_i) +
 \frac{1}{\lambda_t} \bar\delta_{y_t,j} \phi(x_{t}) \\
&= \frac{1}{\lambda_t} \sum_{i=0}^n (\overline \alpha_{i,j,t}+\delta_{i,x_t}\bar\delta_{y_t,j}) \phi(\vec x)
\end{aligned}
\end{equation}
\end{enumerate}

Combining equations (\ref{update1}) and (\ref{update3}) we proved the update rule.
\end{proof}

The final decision function is:
\begin{equation}
\hat y = \text{argmax}_j \frac{1}{\theta_j}\sum_{i=0}^n \overline \alpha_{i,j} K(\vec x_i,x)
\end{equation}
This is a simple and elegant update rule which only depends on the vector $\theta$ and $K(,)$.

\section{Experiments}\label{sec:exp}

For our experiments, we utilized the python scikit-learn library \citep{scikit-learn} 
and our own SGD implementation \footnote{https://github.com/erankfmn/Apportioned-margin-classifiers}.
We used SVM RBF kernel, the regularization parameter $C$ was 5-fold cross-validated 
over the set $\{2^{-5}, 2^{-3}, \ldots, 2^{15}\}$, 
 and the gamma
parameter was 5-fold cross-validated over the set $\{2^{-15}, 2^{-3},\ldots, 2^{3}\}$. 

\subsection{Illustrative example}
Before presenting the experiments, we give an example that illustrates the 
power of our approach. We generated 2-dimensional data
from four separable classes with different biases, and generated their data points 
using a normal distribution.
Figure \ref{fig:exp} shows a comparison of the results of different cost vectors.
This example illustrates how our approach shifts the decision boundary away from high-cost
classes, while still maintaining the convexity of the solution space.
%

\begin{figure}[h]
\centering
\includegraphics[width=10cm]{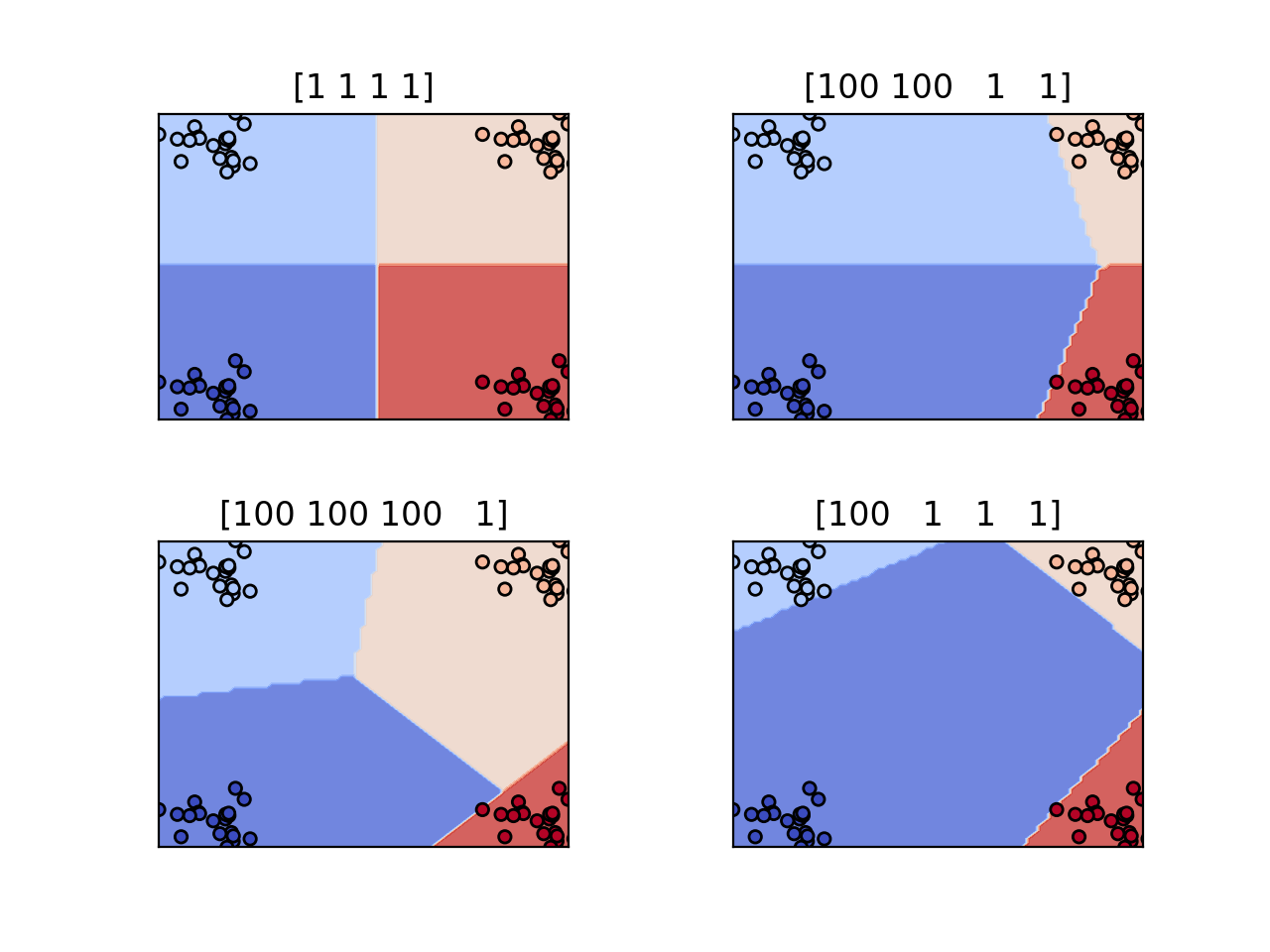}
\caption{Separating classes with different prioritization vectors
assigned to classes \{blue, light blue, yellow, red\}.}
\label{fig:exp}
\end{figure}

\begin{figure}[h]
\centering
\includegraphics[width=8cm]{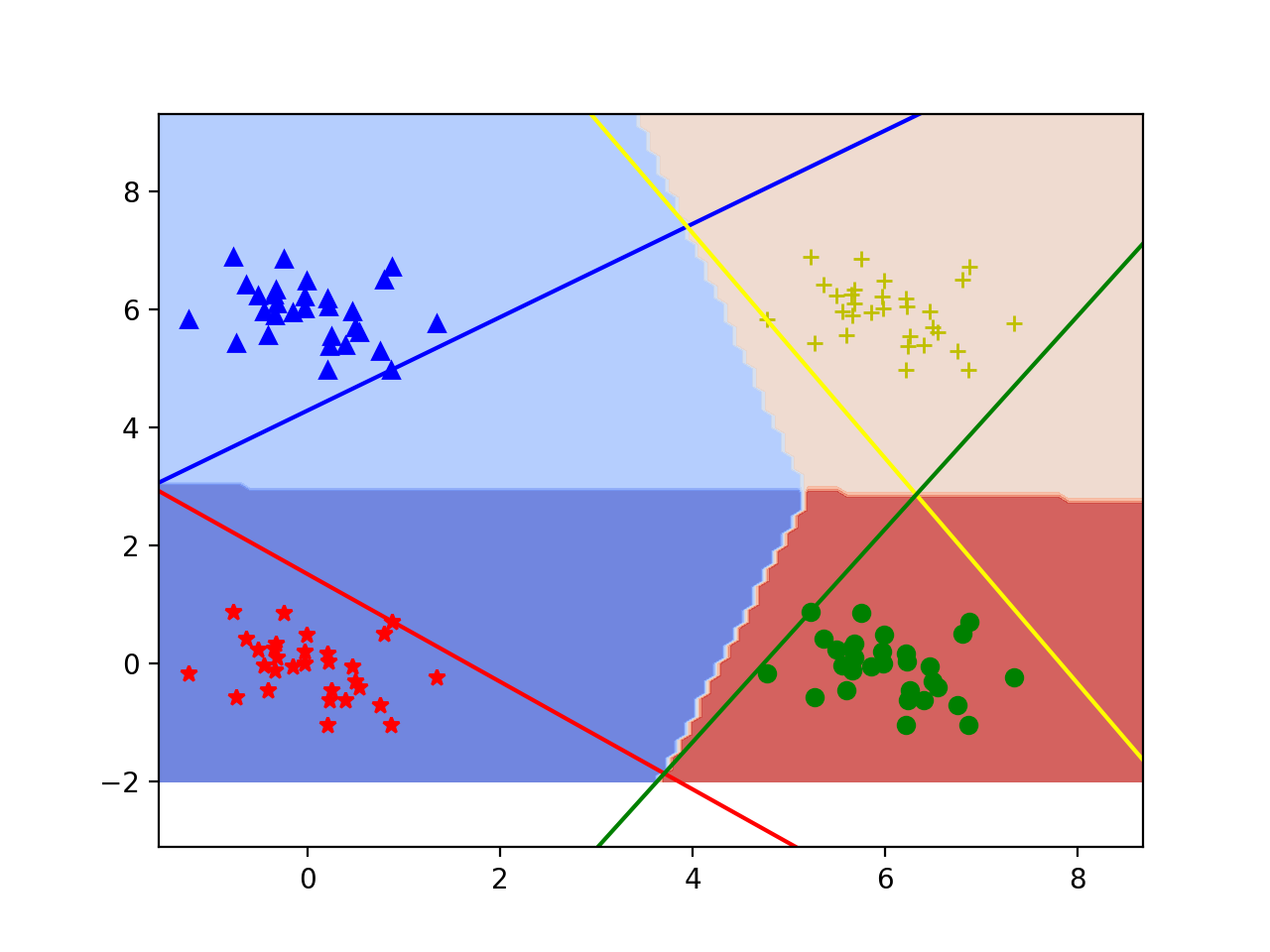}
\caption{Decision boundary and the  $w_j x + b_j = \theta_j$ hyperplanes.}
\end{figure}

%

We also compared our result against the CSOVO, CSOVA and CSCS methods described in Section \ref{sec:background}.
Figure \ref{fig2} shows a comparison of these different methods for the same cost vector.
It is evident that the other methods were unable to significantly shift the decision boundary
away from the high-cost classes.

\begin{figure}[h]
\centering
\includegraphics[width=12cm]{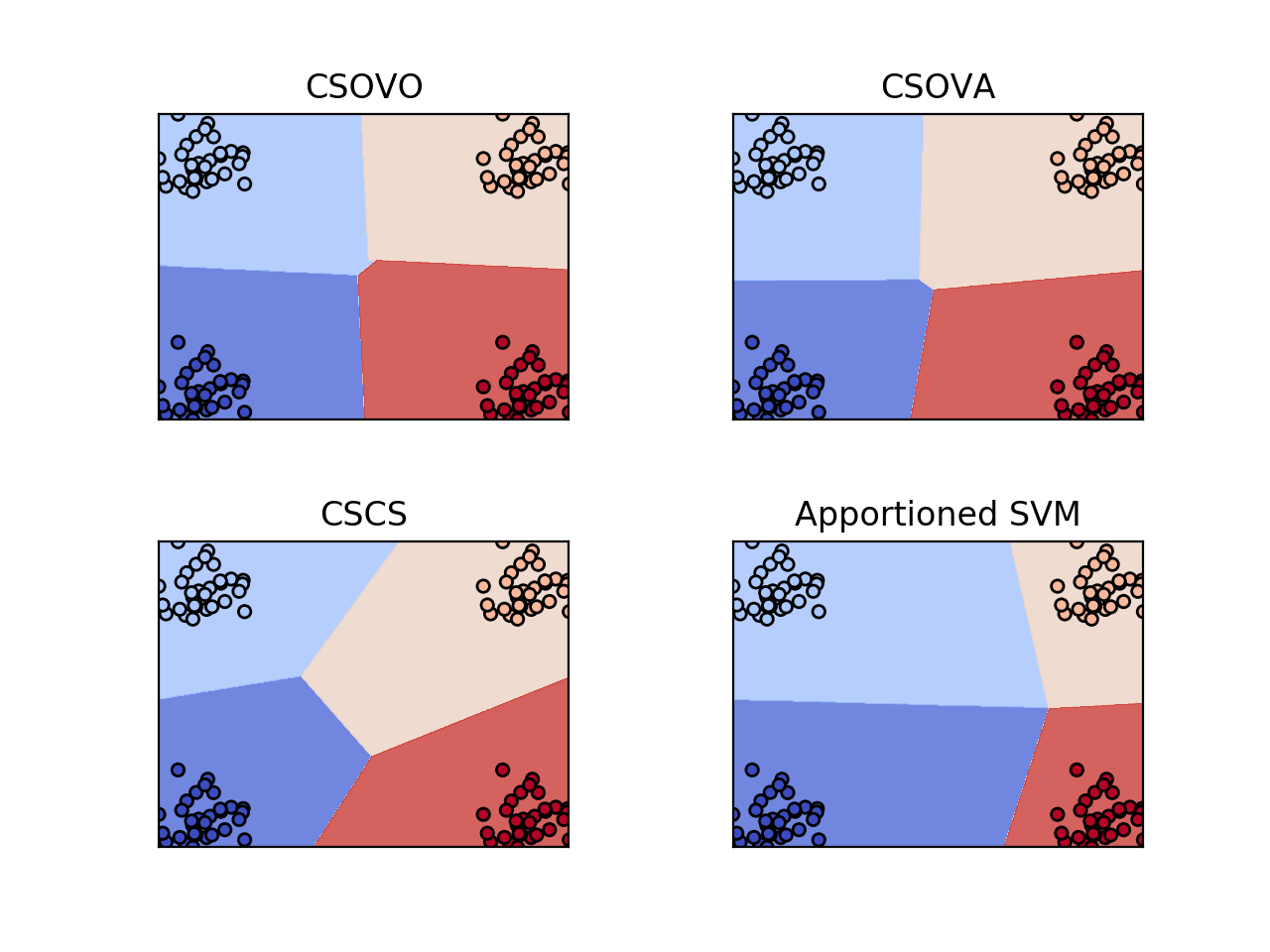}
\caption{A comparison of decision boundaries between different algorithms for
priority vector $\{10,10,1,1\}$.}
\label{fig2}
\end{figure}

\subsection{Benchmarks}
We considered various benchmark datasets from  
LibSVM Machine Learning repository \citep{CC01a}.
Three of these (Heart , Breast, Diabetes) are 
healthcare-oriented datasets where the prioritization is inherent 
(false positives are preferred over false negatives).
We set the cost vector for the more important
class to be twice than the rest for all datasets except for German Credit (Statlog),
where the  cost
vector of $\{5,1\}$ was already stipulated by the authors. 
We also considered several multicategorial datasets,
and randomly chose one of the classes to be more important than the others.
The datasets characteristics are summarized in Table \ref{tab:dat}  

The reported score is expected risk (the sum of the number of misclassification 
for each class times its cost, normalized by the data size).
Results were averaged over 10-fold
cross validation.
We compared our method against other weighted multiclass methods, 
and the results are presented in Table \ref{tab:uci}.
Our method compares favorably with the others reducing the cost by $10-20\%$.
We also report the sensitivity of the most important class. 
These show empirically that, as claimed, our method successfully increased 
the sensitivity of the most important class, and did this more 
effectively than all other methods.

\begin{table}
    \centering
\captionof{table}{Dataset characteristics}
 \begin{tabular}{lcccccccccc}
      \toprule
      				                        &{\small \#classes}        & {\small \#examples}            &{\small \#dims}        \\ 
      \midrule
          {\small Breast Cancer}           & {\small 2}                    & {\small 683}                         & {\small 10}    \\
   	  {\small Diabetes}                   & {\small 2}                     & {\small 768}                          & {\small 8}       \\
	   {\small Heart}                       & {\small 2}                     & {\small 270}                          & {\small 13}       \\
           {\small German}                    & {\small 2}                    &{\small 1000}                        & {\small 20}  \\
           {\small Glass}                        &{\small 6}                    & {\small 214}                               & {\small 9} \\
            {\small Iris}                        &{\small 3}                    & {\small 150}                               & {\small 4} \\
            {\small Vehicle}                    & {\small 4}                     & {\small 846}                         & {\small 18}       \\
            {\small Letter}                    & {\small 26}                     & {\small 20000}                         & {\small 16}       \\
 \end{tabular}       
 \label{tab:dat}  
\end{table}

\begin{table}
\begin{center}
 \begin{tabular}{llcccc}
      \toprule
      	   {\small Dataset}  & {\small our SVM} & {\small CSOVA} & {\small CSCS}  & {\small CSOVO} \\

          \midrule
          {\small Breast}   & {\small 0.058} & {\textbf {0.054}} & {\small 0.056}& {\small 0.054} \\
         			 & {\textbf {86\%}} & {\small 84\%} & {\small 83\%}& {\small 84\%} \\
       \midrule
          {\small Diabetes}    & {\textbf  {0.343}}    & {\small 0.353} & {\small 0.346}  & {\small 0.351} \\
                   			 & {\textbf {77\%}} & {\small 74\%} & {\small 74\%}& {\small 74\%} \\
	 \midrule
          {\small Heart}    & {\small  {0.215}}    & {\small 0.218} & {\textbf {0.214}}  & {\small 0.218} \\
                   		  & {\textbf {80\%}} & {\small 81\%} & {\small 80\%}& {\small 81\%} \\
          
       \midrule
          {\small German}        & {\textbf  {0.250}}      & {\small 0.300}     & {\small 0.3000}    & {\small 0.300} \\
           				  & {\textbf {74\%}} & {\small 70\%} & {\small 70\%}& {\small 70\%} \\

     \midrule
          {\small Iris}      & {\textbf  {0.033}}           & {\small 0.053}          & {\small 0.043}        & {\small 0.053} \\
				     & {\textbf {97\%}}          & {\small 95\%}            & {\small 93\%}         & {\small 95\%} \\
  \midrule
          {\small Glass}      & {\textbf  {0.491}}           & {\small 0.501}          & {\small 0.495}        & {\small 0.501} \\
				     & {\textbf {74\%}}          & {\small 63\%}            & {\small 65\%}         & {\small 63\%} \\

	\midrule
          {\small Vehicle}   & {\textbf  {0.262}}           & {\small 0.281}          & {\small 0.272}         & {\small 0.281}\\
                                     & {\textbf {76\%}}          & {\small 64\%}         & {\small 62\%}          & {\small 60\%} \\
          \midrule
          {\small Letter}   & {\textbf  {0.067}}           & {\small 0.231}          & {\small 0.173}         & {\small 0.231}\\
 			         & {\textbf {96\%}}          & {\small 91\%}         & {\small 91\%}          & {\small 91\%} \\

            \bottomrule
    \end{tabular}
\caption{Benchmarks for cost-sensitive classification}
\label{tab:uci}
\end{center}
\end{table}

\subsection{Cost Sensitive NeuralNets}
The results of the linear classifiers encourage us to adapt the framework to neural networks (NNs),
as cost-sensitive NNs are not well-understood.
A naive approach could simply multiply the output of the final loss layer 
(usually a softmax layer) by the appropriate weights, 
but this leads to poor performance (Table \ref{tab:SVM}).
\citet{DBLP:conf/ecai/KukarK98} suggested the cost function 
$\sum_{j=0}^{k}\sum_{i=0}^{n} ((y_j - o_j) C(i,j))^2$
where $o_j$ is the actual output of the $j$-th output neuron, 
$y_j$ is the desired output,
and $C(i,j)$ is the cost for misclassifying example $i$ as $j$.

We executed the following experiment on the use of NNs to identify superclasses.
We used the CIFAR-100 dataset, a dataset of small images each 
labelled with a class, where all classes are themselves grouped into superclasses.
For our experiment, we chose two superclasses and two subclasses for each superclass.
These were superclass {\em aquatic mammals} with subclasses dolphin and beaver,
and superclass {\em flower} with subclasses orchid and sunflower.
Our prioritization vector was $\{1,1,2,2\}$,
assigning double priority to aquatic mammals.
Each subclass had 500 training images and 100 testing images.
Our premise was that that our cost function could be used to improve image recognition
via classes, that is to train a classifier to favor misclassifying a dolphin as a beaver over
a dolphin as an orchid.

For a more detailed description of the NeuralNet architecture:
The architecture of our net is summarized in Figure \ref{Network Architecture}.
The net is similar to the one introduced by \cite{Krizhevsky:2017} also known as ``AlexNet'':
It contains three convolutional layers and two fully connected layers,
each followed by a batch normalization (BN) 
and dropout layers with probability $0.75$.
We used ReLU as our activation function.
The first layer CONV-1 uses a kernel of $5 \times 5 \times 24$ with stride $1$ .
The second layer CONV-2 uses a kernel of $5 \times 5 \times 48$ with stride $2$.
The third layer CONV-3 uses a kernel of $4 \times 4 \times 64$ with stride $2$.
The fourth layer is a 200 neurons fully connected layer (FC) followed by a hinge loss (HL).
In the final layer we compared three different possible tools,  
the regular cross entropy with softmax and class weighting, SVM hinge-loss with class weighting and our prioritization loss.
We trained our models using Adam gradient descent \citep{Kingma14} 
with a batch size of 100 examples and an exponential weight decay.

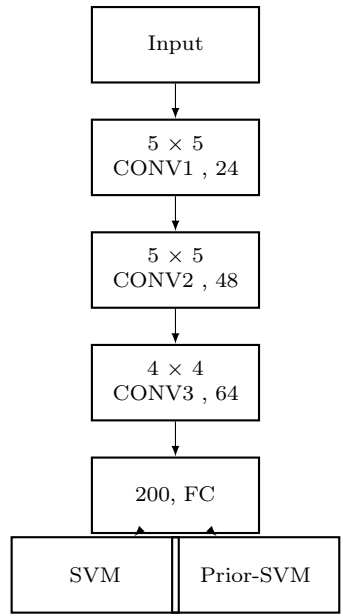
\begin{figure}
\begin{center}

\begin{tikzpicture}
  \node[block] (a) {Input};
  \node[block, below of = a] (b) {$5 \times 5$ CONV1 , 24};
  \draw[line] (a)-- (b); 
   
  
  \node[block, below of = b] (d) {$5 \times 5$ CONV2 , 48};
  \draw[line] (b)-- (d); 
   
  
  \node[block, below of = d] (f) {$4 \times 4$ CONV3 , 64};
  \draw[line] (d)-- (f); 
   
  
  \node[block, below of = f] (h) {200, FC};
  \draw[line] (f)-- (h);

  \node[block, below left of = h] (k) {SVM};
 \node[block, below right of = h] (j) {Prior-SVM};

  \draw[line] (h)-- (k);
 \draw[line] (h)-- (j);

\end{tikzpicture}
\end{center}

\caption{Network Architecture}
\label{Network Architecture}

\end{figure}

Our final result are presented in Table \ref{tab:SVM}, which shows the
overall cost function, and also the sensitivity of the preferred superclass
(aquatic mammals), for NNs utilizing the different algorithms (Weighted softmax , Weighted Hinge loss , Apportioned margin loss).

\begin{table}
\begin{center}
\begin{tabular}{ l l l}
	\toprule
	{\small Method}	& {\small Sensitivity}    &{\small Cost} \\
	\midrule
	{\small Weighted softmax}	& 81.5\%    & 0.186 \\  
	{\small Weighted Hinge loss}	& 80.5\%     & 0.201 \\  
	{\small Apportioned margin loss}  & 86\%     & 0.153 \\ 
	\bottomrule
\end{tabular}
\caption{Superclass experiment}
\label{tab:SVM}
\end{center}
\end{table}

\section{Conclusions}
We introduced the apportioned margin framework which places the cost on the margins rather than on misclassification.
This framework guarantees  an tighter out-of-sample error bound for more important classes
sometime at the expense of less important classes according to a user-defined priority vector.
We presented both  linear, kernelized, and NeuralNet 
vesrsions
 for this framework and 
demonstrated the success of our method on different datasets.
%

\newpage
\bibliographystyle{spbasic}      
\bibliography{mybibfile}   

\begin{thebibliography}{42}
\providecommand{\natexlab}[1]{#1}
\providecommand{\url}[1]{{#1}}
\providecommand{\urlprefix}{URL }
\expandafter\ifx\csname urlstyle\endcsname\relax
  \providecommand{\doi}[1]{DOI~\discretionary{}{}{}#1}\else
  \providecommand{\doi}{DOI~\discretionary{}{}{}\begingroup
  \urlstyle{rm}\Url}\fi
\providecommand{\eprint}[2][]{\url{#2}}

\bibitem[{An and Liang(2013)}]{DBLP:journals/ijon/AnL13}
An W, Liang M (2013) Fuzzy support vector machine based on within-class scatter
  for classification problems with outliers or noises. Neurocomputing
  110:101--110, \doi{10.1016/j.neucom.2012.11.023},
  \urlprefix\url{https://doi.org/10.1016/j.neucom.2012.11.023}

\bibitem[{Asif et~al.(2015)Asif, Xing, Behpour, and
  Ziebart}]{DBLP:conf/uai/AsifXBZ15}
Asif K, Xing W, Behpour S, Ziebart BD (2015) Adversarial cost-sensitive
  classification. In: {UAI}, {AUAI} Press, pp 92--101

\bibitem[{Bradley(1997)}]{DBLP:journals/pr/Bradley97}
Bradley AP (1997) The use of the area under the {ROC} curve in the evaluation
  of machine learning algorithms. Pattern Recognition 30(7):1145--1159,
  \doi{10.1016/S0031-3203(96)00142-2},
  \urlprefix\url{https://doi.org/10.1016/S0031-3203(96)00142-2}

\bibitem[{Bredensteiner and
  Bennett(1999)}]{DBLP:journals/coap/BredensteinerB99}
Bredensteiner EJ, Bennett KP (1999) Multicategory classification by support
  vector machines. Comp Opt and Appl 12(1-3):53--79,
  \doi{10.1023/A:1008663629662},
  \urlprefix\url{https://doi.org/10.1023/A:1008663629662}

\bibitem[{van~den Burg and Groenen(2016)}]{JMLR:v17:14-526}
van~den Burg GJ, Groenen PJ (2016) Gensvm: A generalized multiclass support
  vector machine. Journal of Machine Learning Research 17(225):1--42,
  \urlprefix\url{http://jmlr.org/papers/v17/14-526.html}

\bibitem[{Chang and Lin(2011)}]{CC01a}
Chang CC, Lin CJ (2011) {LIBSVM}: A library for support vector machines. ACM
  Transactions on Intelligent Systems and Technology 2:27:1--27:27, software
  available at \url{http://www.csie.ntu.edu.tw/~cjlin/libsvm}

\bibitem[{Crammer and Singer(2001)}]{DBLP:journals/jmlr/CrammerS01}
Crammer K, Singer Y (2001) On the algorithmic implementation of multiclass
  kernel-based vector machines. Journal of Machine Learning Research
  2:265--292, \urlprefix\url{http://www.jmlr.org/papers/v2/crammer01a.html}

\bibitem[{Do\u{g}an et~al.(2016)Do\u{g}an, Glasmachers, and
  Igel}]{JMLR:v17:11-229}
Do\u{g}an {\"U}, Glasmachers T, Igel C (2016) A unified view on multi-class
  support vector classification. Journal of Machine Learning Research
  17(45):1--32, \urlprefix\url{http://jmlr.org/papers/v17/11-229.html}

\bibitem[{Elkan(2001)}]{DBLP:conf/ijcai/Elkan01}
Elkan C (2001) The foundations of cost-sensitive learning. In: Nebel B (ed)
  Proceedings of the Seventeenth International Joint Conference on Artificial
  Intelligence, {IJCAI} 2001, Seattle, Washington, USA, August 4-10, 2001,
  Morgan Kaufmann, pp 973--978,
  \urlprefix\url{http://ijcai.org/proceedings/2001-1}

\bibitem[{Fawcett and Provost(1997)}]{DBLP:journals/datamine/FawcettP97}
Fawcett T, Provost FJ (1997) Adaptive fraud detection. Data Min Knowl Discov
  1(3):291--316, \doi{10.1023/A:1009700419189},
  \urlprefix\url{https://doi.org/10.1023/A:1009700419189}

\bibitem[{Fern{\'{a}}ndez et~al.(2018)Fern{\'{a}}ndez, Garc{\'{\i}}a, Galar,
  Prati, Krawczyk, and Herrera}]{DBLP:books/sp/FernandezGGPKH18}
Fern{\'{a}}ndez A, Garc{\'{\i}}a S, Galar M, Prati RC, Krawczyk B, Herrera F
  (2018) Learning from Imbalanced Data Sets. Springer,
  \doi{10.1007/978-3-319-98074-4},
  \urlprefix\url{https://doi.org/10.1007/978-3-319-98074-4}

\bibitem[{Fu et~al.(2018)Fu, Zhang, and Liu}]{DBLP:journals/ma/FuZL18}
Fu S, Zhang S, Liu Y (2018) Adaptively weighted large-margin angle-based
  classifiers. J Multivariate Analysis 166:282--299

\bibitem[{Fung and Mangasarian(2005)}]{DBLP:journals/ml/FungM05}
Fung G, Mangasarian OL (2005) Multicategory proximal support vector machine
  classifiers. Machine Learning 59(1-2):77--97,
  \doi{10.1007/s10994-005-0463-6},
  \urlprefix\url{https://doi.org/10.1007/s10994-005-0463-6}

\bibitem[{Hoffgen et~al.(1995)Hoffgen, Simon, and
  Vanhorn}]{Hoffgen:1995:RTS:207270.207282}
Hoffgen K, Simon H, Vanhorn K (1995) Robust trainability of single neurons. J
  Comput Syst Sci 50(1):114--125, \doi{10.1006/jcss.1995.1011},
  \urlprefix\url{http://dx.doi.org/10.1006/jcss.1995.1011}

\bibitem[{Huang and Ling(2005)}]{DBLP:journals/tkde/HuangL05}
Huang J, Ling CX (2005) Using {AUC} and accuracy in evaluating learning
  algorithms. {IEEE} Trans Knowl Data Eng 17(3):299--310,
  \doi{10.1109/TKDE.2005.50},
  \urlprefix\url{https://doi.org/10.1109/TKDE.2005.50}

\bibitem[{Huang and Du(2005)}]{1527706}
Huang YM, Du SX (2005) Weighted support vector machine for classification with
  uneven training class sizes. In: 2005 International Conference on Machine
  Learning and Cybernetics, vol~7, pp 4365--4369 Vol. 7,
  \doi{10.1109/ICMLC.2005.1527706}

\bibitem[{Jan et~al.(2012)Jan, Wang, Lin, and Lin}]{DBLP:conf/kdd/JanWLL12}
Jan T, Wang D, Lin C, Lin H (2012) A simple methodology for soft cost-sensitive
  classification. In: {KDD}, {ACM}, pp 141--149

\bibitem[{Ke et~al.(2013)Ke, Liu, and Pan}]{ke2013}
Ke HX, Liu GD, Pan GB (2013) Fuzzy support vector machine for polsar image
  classification. In: Advances in Civil Infrastructure Engineering, Trans Tech
  Publications, Advanced Materials Research, vol 639, pp 1162--1167,
  \doi{10.4028/www.scientific.net/AMR.639-640.1162}

\bibitem[{Kingma and Ba(2015)}]{Kingma14}
Kingma DP, Ba J (2015) Adam: A method for stochastic optimization. In: {ICLR}
  3rd International Conference on Learning Representations, pp 219--224

\bibitem[{Krizhevsky et~al.(2017)Krizhevsky, Sutskever, and
  Hinton}]{Krizhevsky:2017}
Krizhevsky A, Sutskever I, Hinton GE (2017) Imagenet classification with deep
  convolutional neural networks. Commun ACM 60(6):84--90,
  \doi{10.1145/3065386}, \urlprefix\url{http://doi.acm.org/10.1145/3065386}

\bibitem[{Kubat et~al.(1998)Kubat, Holte, and
  Matwin}]{DBLP:journals/ml/KubatHM98}
Kubat M, Holte RC, Matwin S (1998) Machine learning for the detection of oil
  spills in satellite radar images. Machine Learning 30(2-3):195--215,
  \doi{10.1023/A:1007452223027},
  \urlprefix\url{https://doi.org/10.1023/A:1007452223027}

\bibitem[{Kukar and Kononenko(1998)}]{DBLP:conf/ecai/KukarK98}
Kukar M, Kononenko I (1998) Cost-sensitive learning with neural networks. In:
  Prade H (ed) 13th European Conference on Artificial Intelligence, Brighton,
  UK, August 23-28 1998, Proceedings., John Wiley and Sons, pp 445--449

\bibitem[{Kuznetsov et~al.(2015)Kuznetsov, Mohri, and Syed}]{pmlr-37}
Kuznetsov V, Mohri M, Syed U (2015) Rademacher complexity margin bounds for
  learning with a large number of classes. In: Bach F, Blei D (eds) Proceedings
  of the 32nd International Conference on Machine Learning, PMLR, Lille,
  France, Proceedings of Machine Learning Research, vol~37, pp 391--399,
  \urlprefix\url{http://proceedings.mlr.press/v37/cortes15.html}

\bibitem[{Lauer et~al.(2007)Lauer, Suen, and
  Bloch}]{DBLP:journals/pr/LauerSB07}
Lauer F, Suen CY, Bloch G (2007) A trainable feature extractor for handwritten
  digit recognition. Pattern Recognition 40(6):1816--1824,
  \doi{10.1016/j.patcog.2006.10.011},
  \urlprefix\url{https://doi.org/10.1016/j.patcog.2006.10.011}

\bibitem[{Ledoux and Talagrand(1991)}]{LedouxTal91}
Ledoux M, Talagrand M (1991) Probability in Banach Spaces. Springer-Verlag

\bibitem[{Lee et~al.(2004)Lee, Lin, and Wahba}]{doi:10.1198/016214504000000098}
Lee Y, Lin Y, Wahba G (2004) Multicategory support vector machines. Journal of
  the American Statistical Association 99(465):67--81,
  \doi{10.1198/016214504000000098},
  \urlprefix\url{https://doi.org/10.1198/016214504000000098},
  \eprint{https://doi.org/10.1198/016214504000000098}

\bibitem[{Lin and Wang(2002)}]{DBLP:journals/tnn/LinW02}
Lin C, Wang S (2002) Fuzzy support vector machines. {IEEE} Trans Neural
  Networks 13(2):464--471, \doi{10.1109/72.991432},
  \urlprefix\url{https://doi.org/10.1109/72.991432}

\bibitem[{Liu(2007)}]{pmlr-v2-liu07b}
Liu Y (2007) Fisher consistency of multicategory support vector machines. In:
  Meila M, Shen X (eds) Proceedings of the Eleventh International Conference on
  Artificial Intelligence and Statistics, PMLR, San Juan, Puerto Rico,
  Proceedings of Machine Learning Research, vol~2, pp 291--298,
  \urlprefix\url{http://proceedings.mlr.press/v2/liu07b.html}

\bibitem[{Liu and Yuan(2011)}]{doi:10.1198/jcgs.2010.09206}
Liu Y, Yuan M (2011) Reinforced multicategory support vector machines. Journal
  of Computational and Graphical Statistics 20(4):901--919,
  \doi{10.1198/jcgs.2010.09206},
  \urlprefix\url{https://doi.org/10.1198/jcgs.2010.09206},
  \eprint{https://doi.org/10.1198/jcgs.2010.09206}

\bibitem[{McDonnell et~al.(2014)McDonnell, Tissera, van Schaik, and
  Tapson}]{DBLP:journals/corr/McDonnellTST14}
McDonnell MD, Tissera MD, van Schaik A, Tapson J (2014) Fast, simple and
  accurate handwritten digit classification using extreme learning machines
  with shaped input-weights. CoRR abs/1412.8307,
  \urlprefix\url{http://arxiv.org/abs/1412.8307}, \eprint{1412.8307}

\bibitem[{Mohri et~al.(2012)Mohri, Rostamizadeh, and
  Talwalkar}]{mohri-book2012}
Mohri M, Rostamizadeh A, Talwalkar A (2012) Foundations Of Machine Learning.
  The MIT Press

\bibitem[{Pedregosa et~al.(2011)Pedregosa, Varoquaux, Gramfort, Michel,
  Thirion, Grisel, Blondel, Prettenhofer, Weiss, Dubourg, Vanderplas, Passos,
  Cournapeau, Brucher, Perrot, and Duchesnay}]{scikit-learn}
Pedregosa F, Varoquaux G, Gramfort A, Michel V, Thirion B, Grisel O, Blondel M,
  Prettenhofer P, Weiss R, Dubourg V, Vanderplas J, Passos A, Cournapeau D,
  Brucher M, Perrot M, Duchesnay E (2011) Scikit-learn: Machine learning in
  {P}ython. Journal of Machine Learning Research 12:2825--2830

\bibitem[{Platt et~al.(1999)Platt, Cristianini, and
  Shawe-Taylor}]{Platt1999LargeMD}
Platt JC, Cristianini N, Shawe-Taylor J (1999) Large margin dags for multiclass
  classification. In: NIPS

\bibitem[{Rowe et~al.(2003)Rowe, Vin, Plagemann, Shenoy, and
  Smith}]{DBLP:conf/mm/2003}
Rowe LA, Vin HM, Plagemann T, Shenoy PJ, Smith JR (eds) (2003) Proceedings of
  the Eleventh {ACM} International Conference on Multimedia, Berkeley, CA, USA,
  November 2-8, 2003, {ACM},
  \urlprefix\url{http://dl.acm.org/citation.cfm?id=957013}

\bibitem[{Sch{\"{o}}lkopf et~al.(2001)Sch{\"{o}}lkopf, Herbrich, and
  Smola}]{DBLP:conf/colt/ScholkopfHS01}
Sch{\"{o}}lkopf B, Herbrich R, Smola AJ (2001) A generalized representer
  theorem. In: Helmbold DP, Williamson RC (eds) Computational Learning Theory,
  14th Annual Conference on Computational Learning Theory, {COLT} 2001 and 5th
  European Conference on Computational Learning Theory, EuroCOLT 2001,
  Amsterdam, The Netherlands, July 16-19, 2001, Proceedings, Springer, Lecture
  Notes in Computer Science, vol 2111, pp 416--426,
  \doi{10.1007/3-540-44581-1\_27},
  \urlprefix\url{https://doi.org/10.1007/3-540-44581-1\_27}

\bibitem[{Shalev{-}Shwartz et~al.(2011)Shalev{-}Shwartz, Singer, Srebro, and
  Cotter}]{DBLP:journals/mp/Shalev-ShwartzSSC11}
Shalev{-}Shwartz S, Singer Y, Srebro N, Cotter A (2011) Pegasos: primal
  estimated sub-gradient solver for {SVM}. Math Program 127(1):3--30,
  \doi{10.1007/s10107-010-0420-4},
  \urlprefix\url{https://doi.org/10.1007/s10107-010-0420-4}

\bibitem[{Shao et~al.(2009)Shao, Shen, Ong, Wilder{-}Smith, and
  Li}]{DBLP:journals/tbe/ShaoSOWL09}
Shao S, Shen KQ, Ong CJ, Wilder{-}Smith EPV, Li XP (2009) Automatic {EEG}
  artifact removal: {A} weighted support vector machine approach with error
  correction. {IEEE} Trans Biomed Engineering 56(2):336--344,
  \doi{10.1109/TBME.2008.2005969},
  \urlprefix\url{https://doi.org/10.1109/TBME.2008.2005969}

\bibitem[{Wang et~al.(2004)Wang, Yang, Liu, Xu, and
  Chou}]{doi:10.1093/protein/gzh061}
Wang M, Yang J, Liu GP, Xu ZJ, Chou KC (2004) Weighted-support vector machines
  for predicting membrane protein types based on pseudo-amino acid composition.
  Protein Engineering, Design and Selection 17(6):509--516,
  \doi{10.1093/protein/gzh061},
  \urlprefix\url{http://dx.doi.org/10.1093/protein/gzh061},
  \eprint{/oup/backfile/content_public/journal/peds/17/6/10.1093/protein/gzh061/2/gzh061.pdf}

\bibitem[{Weston and Watkins(1999)}]{DBLP:conf/esann/WestonW99}
Weston J, Watkins C (1999) Support vector machines for multi-class pattern
  recognition. In: {ESANN} 1999, 7th European Symposium on Artificial Neural
  Networks, Bruges, Belgium, April 21-23, 1999, Proceedings, pp 219--224,
  \urlprefix\url{https://www.elen.ucl.ac.be/Proceedings/esann/esannpdf/es1999-461.pdf}

\bibitem[{Yang et~al.(2005)Yang, Song, and Wang}]{Yang2005WeightedSV}
Yang X, Song Q, Wang Y (2005) Weighted support vector machine for data
  classification. Proceedings 2005 IEEE International Joint Conference on
  Neural Networks, 2005 2:859--864 vol. 2

\bibitem[{Zhang and Liu(2014)}]{10.1093/biomet/asu017}
Zhang C, Liu Y (2014) {Multicategory angle-based large-margin classification}.
  Biometrika 101(3):625--640, \doi{10.1093/biomet/asu017},
  \urlprefix\url{https://doi.org/10.1093/biomet/asu017},
  \eprint{http://oup.prod.sis.lan/biomet/article-pdf/101/3/625/5036089/asu017.pdf}

\bibitem[{Zhang et~al.(2011)Zhang, Liu, Fan, Lee, and
  Li}]{10.1007/978-3-642-25661-5_47}
Zhang Q, Liu D, Fan Z, Lee Y, Li Z (2011) Feature and sample weighted support
  vector machine. In: Wang Y, Li T (eds) Knowledge Engineering and Management,
  Springer Berlin Heidelberg, Berlin, Heidelberg, pp 365--371

\end{thebibliography}

\end{document}